\documentclass{article} 
\usepackage{iclr2027_conference,times}

\usepackage{amsmath,amsfonts,bm}

\def\eqref#1{equation~\ref{#1}}

\def\1{\bm{1}}

\DeclareMathAlphabet{\mathsfit}{\encodingdefault}{\sfdefault}{m}{sl}
\SetMathAlphabet{\mathsfit}{bold}{\encodingdefault}{\sfdefault}{bx}{n}

\def\gG{{\mathcal{G}}}

\def\gL{{\mathcal{L}}}

\def\gO{{\mathcal{O}}}

\def\gV{{\mathcal{V}}}

\newcommand{\E}{\mathbb{E}}

\newcommand{\R}{\mathbb{R}}

\newcommand{\Var}{\mathrm{Var}}

\DeclareMathOperator*{\argmin}{arg\,min}

\usepackage{amsmath, amssymb, mathtools, amsthm}

\usepackage{graphicx}
\usepackage{booktabs}
\usepackage{enumitem}
\usepackage{hyperref}
\usepackage{cleveref}
\usepackage{url}
\usepackage{wrapfig}

\usepackage{algorithm}
\usepackage{algpseudocode}

\usepackage[%
linewidth=1pt,
middlelinecolor= black,
middlelinewidth=0.4pt,
roundcorner=1pt,
topline = false,
rightline = false,
bottomline = false,
rightmargin=0pt,
skipbelow=0pt,
leftmargin=0cm,
innerleftmargin=0.25cm,
innerrightmargin=0pt,
innertopmargin=4pt,
innerbottommargin=0pt,
]{mdframed}

\title{Variance-Optimal Control Variates for Learning with Black-box Feedback}

\author{Zihao Zhao\\
School of Computational Science and Engineering\\
Georgia Institute of Technology\\
\texttt{zzhao628@gatech.edu}\\
\And
Shuhan Zhang\\
School of Computational Science and Engineering\\
Georgia Institute of Technology\\
\texttt{szhang3007@gatech.edu}\\
\And
Kai Wang\\
School of Computational Science and Engineering\\
Georgia Institute of Technology\\
\texttt{kwang692@gatech.edu}
}

\theoremstyle{plain}
\newtheorem{theorem}{Theorem}[section]
\newtheorem{proposition}[theorem]{Proposition}
\newtheorem{lemma}[theorem]{Lemma}
\newtheorem{corollary}[theorem]{Corollary}
\theoremstyle{definition}

\theoremstyle{remark}
\newtheorem{remark}[theorem]{Remark}
\theoremstyle{example}
\newtheorem{example}[theorem]{Example}

\crefname{lemma}{Lemma}{Lemmas}
\crefname{assumption}{Assumption}{Assumptions}
\crefname{prob}{Problem}{Problems}
\crefname{eq}{Equation}{Equations}
\crefname{ass}{Assumption}{Assumptions}
\crefname{assum}{Assumption}{Assumptions}
\crefname{remark}{Remark}{Remarks}

\newcommand{\ctx}[1]{z_{#1}}
 \newcommand{\ghat}[1]{\hat{g}_{#1}}

\iclrfinalcopy

\begin{document}

\maketitle

\begin{abstract}
Modern models increasingly learn through black-box oracles such as humans, optimization solvers, and external tools that provide feedback without exposing their internal mechanisms.
In such cases, gradients are often estimated using score-function (REINFORCE) estimators but can suffer from prohibitive variance.
A common remedy is to learn an (action-)value function as a control variate.
In this paper, we first observe that even an exact action-value function can be arbitrarily far from variance-optimal.
We show that this gap arises because the value function minimizes the noise in each action's own gradient term, while an action can still affect the rest of the gradient estimator through shared parameters.
A simple unbiased correction, at no extra oracle cost, can still reduce its variance by an arbitrarily large factor.
Motivated by this, we then prove that the residual variance can be decomposed exactly by actions with no cross terms.
This decomposition yields a closed-form \emph{variance-minimizing correction} for neural-network parameters, which can be computed by a simple projection.
Empirically, our correction consistently reduces the variance left by the value function and improves learning across all tasks. 
The source code for all experiments is available  \href{https://anonymous.4open.science/r/black_box_opt-B4C6/}{here}.
\end{abstract}
\section{Introduction}
Many learning systems optimize predictions through a component that is not differentiable. In general, a predictor reads a context and outputs a vector of parameters, and then a black-box oracle evaluates the parameters and returns only a scalar loss. In decision-focused learning (DFL)~\citep{elmachtoub2022smart,mandi2024decision}, for example, the predicted parameters define an optimization problem, and the loss is the regret of the decision made by the solver.
In large language model (LLM) agents, the predicted parameters can be a tool configuration, e.g., which Model Context Protocol (MCP) servers the agent allows to call~\citep{mcp2024}, and the loss is whether the agent then completes the task.
In all these cases, the downstream computation is opaque, so we have no access to its gradient, and each oracle call can potentially be expensive.


A standard approach is the score-function (REINFORCE) estimator~\citep{williams1992simple,sutton1999policy,mohamed2020monte}, which provides an unbiased gradient using only sampled black-box evaluations. Its main limitation is variance. When a single scalar outcome must train many discrete outputs, the resulting gradient can be extremely noisy, especially in long-horizon settings or high-dimensional discrete action spaces~\citep{tokui2017evaluating}, and provides only coarse credit across outputs. A common unbiased remedy is therefore to use an action-value function as an action-dependent control variate~\citep{gu2016muprop, liu2018actiondependent, wu2018variance, foerster2018counterfactual}. The value function replaces the shared evaluations with conditional information about actions, providing customized credit while preserving unbiasedness. It is also the object that exact local marginalization computes by brute force, at the price of one oracle call per value of every action~\citep{titsias2015local,tokui2017evaluating,liu2019rao}, and it is known to be variance-optimal when every action owns its own parameters~\citep{tokui2017evaluating}.

However, we show that using this action-value function alone does not solve the variance-minimization problem. Even when the action-value function is known exactly, its gradient estimator can be arbitrarily far from variance-optimal. A simple unbiased correction, at no extra oracle cost, can still reduce its variance by an arbitrarily large factor. The key reason is parameter sharing. The action-value function minimizes the variance of each action's gradient contribution in isolation. However, the final estimator is the sum of these contributions.
When actions share neural-network parameters, their gradient vectors overlap in the same parameter space, so the variance of the sum also depends on how the contributions interact. A control variate that is optimal for each term separately is therefore not necessarily optimal for their sum, since changing one term can increase or decrease its covariance with the others. The  action-value function does not account for these cross-action interactions.


To characterize and remove this remaining variance, we show that the residual noise left by the action-value function admits an exact action-wise decomposition whose components have no cross terms in variance. Thus, although the original gradient contributions interact through shared parameters, the residual variance can be \textbf{optimized separately} for each action. This decomposition directly yields a closed-form variance-minimizing correction to the action-value function. {We further show that the optimal correction is simply a projection onto the realizable control-variate subspace, \emph{without} learning an additional control-variate network or solving a large global covariance system.}
This gives a different view of value-function baselines, in which the action-value function solves the local credit-assignment problem but is not in general the variance-optimal endpoint. When shared-parameter effects are weak, little correction is needed; when they are strong, substantial variance can remain even after the value function is exact. Table~\ref{tab:comparison} summarizes the trade-offs among these approaches and our method.

Empirically, we evaluate this effect across three tasks, including both controlled settings and black-box learning with language-model predictors. We find that substantial variance can remain after applying the action-value function, and that our correction consistently removes a large fraction of this residual variance. These reductions translate into improved learning performance across all three tasks.

\begin{table}[t]
    \centering
    \small
    \setlength{\tabcolsep}{5.5pt}
    \renewcommand{\arraystretch}{1.15}
    \caption{Trade-offs for optimizing discrete predictions through an expensive black-box oracle. $d$ is the number of actions and $V$ the number of values each action can take. $^\dagger$Variance-optimal within the realizable family of~\Cref{thm:row_opt}.}
    \label{tab:comparison}
    \begin{tabular}{lcccc}
        \toprule
        & \textbf{REINFORCE}
        & \textbf{Exact local marg.}
        & \textbf{Learned surrogate}
        & \textbf{Ours} \\
        \midrule
        Black-box oracle calls
        & $\gO(1)$
        & $\gO(dV)$
        & $\gO(1)$
        & $\gO(1)$ \\
        Unbiased gradient
        & \checkmark
        & \checkmark
        & \texttimes
        & \checkmark \\
        Variance of estimator
        & High
        & Low
        & Low
        & Variance-optimized$^\dagger$ \\
        Credit assignment
        & Uniform
        & Customized
        & Customized
        & Customized \\
        \bottomrule
    \end{tabular}
\end{table}
\section{Related Works}
\paragraph{Gradient estimation with action-independent control variates.}
When the loss is not differentiable, the score-function estimator~\citep{williams1992simple} is the standard unbiased choice, and~\citet{silvestri2026score} recently apply it to DFL. The same estimator drives black-box variational inference, where the log-derivative trick estimates the gradient of the ELBO~\citep{pbj12, ranganath2014black}. Because the score
function has high variance, a long line of work reduces it with baselines, including the optimal constant baselines~\citep{weaver2001optimal, greensmith2004variance}, learned baselines~\citep{sutton1999policy, mnih2014neural}, and leave-one-out baselines~\citep{mnih2016variational, kool2019buy}. Group-relative baselines are now widely used for LLM reinforcement learning~\citep{shao2024deepseekmath, ahmadian2024back, liu2025understanding}, and~\citet{li2026the} derive the optimal token-level baseline for long-horizon LLM RL.
Close to us, \citet{cheng2020trajectory} decompose the variance along a trajectory and show that a per-state baseline leaves the terms that couple different time steps. However, these baselines do not depend on actions, so every coordinate still receives the same credit.

\paragraph{Gradient estimation with action-dependent control variates.} 
Action-dependent control variate removes the credit limitation, and they have been built from learned critics with an analytic correction term~\citep{gu2017qprop}, continuous relaxation~\citep{gu2016muprop, tucker2017rebar, grathwohl2018backpropagation}, Stein's identity~\citep{liu2018actiondependent}, factorized policies~\citep{wu2018variance, foerster2018counterfactual}, and discrete structures~\citep{titsias2022double, shi2022gradient}, although \citet{tucker2018mirage} show their gain over action-independent ones is small in practice. 

{These action-dependent baselines mainly propose new forms of control variate and fit them by regression or by minimizing the variance directly, while the jointly variance-minimizing choice remains less understood when gradient contributions across decisions are coupled.} Known optimality results are for baselines, where the optimal baseline has a closed form~\citep{weaver2001optimal, greensmith2004variance, li2026the}, and so does the optimal action-dependent one when the scores of different coordinates are orthogonal~\citep{wu2018variance}. We characterize the optimum over per-decision control variates without this assumption (\Cref{thm:opt_cv}), which says that a learned control variate should predict how much it can gain over the value function, and when it can gain nothing.

\paragraph{Exact marginalization.}
Another line of work sums out one variable at a time instead of sampling it~\citep{titsias2015local, tokui2017evaluating, liu2019rao, SSM23}, and a related family couples several samples of the same input to the same end~\citep{yin2018arm, dong2020disarm, dimitriev2021arms}. In our notation, these methods build $Q_j$ exactly, by evaluating the loss at \emph{every value of every variable}, which costs $dV$ calls per step and is prohibitive when one call is an agent run. \citet{tokui2017evaluating} show that this choice is variance-optimal when each variable has its own parameters, while the shared-parameter case has been left open~\citep{SSM23}. Our results answer this question by proving that with shared parameters, $Q$ is no longer optimal, and the optimal control variate has a closed form.

\section{Problem Statement}
\paragraph{Black-box learning setting.} For a context $x$, a stochastic predictor parameterized by $\theta\in \R^n$ samples a (discrete) parameter vector $b = (b_1, \dots, b_d)$, whose entries take values in a finite set $\mathcal V$ of size $V$, according to a known categorical distribution
$p(b|x, \theta) = \prod_{j = 1}^d p(b_j | z_j, x, \theta)$.
Here, $z_j$ denotes the context available when action $j$ is sampled. 

We consider two common cases in practice. For an \emph{autoregressive} predictor such as an LLM that outputs one action (decision/token) after another, $z_j = b_{<j}$. For a \emph{factorized} predictor, the coordinates of $b$ are independent given $x$; we fix an arbitrary order of the coordinates and again set $z_j = b_{<j}$, so that $p_\theta(b_j|z_j,x) = p_\theta(b_j|x)$. All results below are stated for $z_j = b_{<j}$ and therefore cover both cases; for a factorized predictor, the chosen order affects the variance of the estimators in \Cref{sec:method} but not their unbiasedness. We write $p_j(v)$ for $p_\theta(b_j=v|\ctx{j})$ throughout.

The sampled vector is then evaluated by a black-box oracle, and we optimize
$\min_{\theta} \gL(\theta) \coloneqq \E_{p(b|\theta)} [f(b)] =\E_{p(b|\theta)} [\mathrm{Eval} (\mathrm{Oracle} (b))]$.
In decision-focused learning, $\mathrm{Oracle}$ solves a downstream problem with the predicted parameter $b$, and $\mathrm{Eval}$ evaluates the decision under the true parameters, so $f$ is the decision regret. Our analysis only assumes that $f$ can be evaluated but not differentiated. We also assume the oracle to be deterministic; an oracle with independent evaluation noise only adds an irreducible variance. 

\paragraph{\textbf{REINFORCE}.}
Since the $\mathrm{Oracle}$ is a black-box, we can employ the score-function to express the gradient of the objective $g$ as
\begin{align}\label{eq:sf}
    g \coloneqq \nabla_\theta \gL (\theta) = \E_{b \sim p(\cdot | \theta)}[f(b) \sum_{j=1}^d s_j(b_j)], \quad s_j(v) \coloneqq \nabla_\theta \log p_j(v).
\end{align}
The single-sample REINFORCE estimator $\hat g_{\mathrm{SF}}\coloneqq f(b)\sum_j s_j(b_j)$ requires only one evaluation of $f$,
but it suffers from both high variance and coarse credit assignment, since the same sampled loss multiplies every score term. Both worsen as $d$ grows because if the sum has $d$ terms, the single scalar $f(b)$ has to explain all of them.

\paragraph{Parameter sharing.}
{We focus on the common case where different actions share model parameters. Then their score vectors can overlap, so reducing the variance of each action separately may not minimize variance of their sum. 

In this paper, our goal is to characterize the unbiased action-dependent control variate that minimizes the variance of the full gradient estimator, i.e., find an optimal control variate that solves
\begin{align*}
    \hat g \in \argmin_{\hat g \in \gG} \Var(\hat g), \quad \text{s.t.} \ \E[\hat g ] = g,
\end{align*}
where $\gG$ denotes the family unbiased score-function estimators with action-dependent control variates.
\Cref{sec:row} restricts $\gG$ further to the corrections a neural-network gradient can realize; all optimality statements in this paper refer to that family.
This is challenging because the optimum must account for interactions across actions rather than treating them independently.}


\section{Variance-optimal Control Variate}\label{sec:method}
We ask how much of the score-function variance can be removed without introducing bias. Our key observation is that the coordinate-wise optimality can be strictly (and even arbitrarily) suboptimal when parameters are shared across actions. Then, we characterize the remaining variance through prefix-wise residuals and derive a variance-optimal correction in closed form.

\subsection{The Action-Value Control Variate}
Consider the $j$-th term in~\Cref{eq:sf}. Let $b_{\mathrm{rem}(j)}$ denote the randomness not contained in $(\ctx{j},b_j)$. Applying the law of total expectation gives
\begin{align}\label{eq:Q}
    \E_b\big[f(b)s_j(b_j)\big]
    &= \E_{\ctx{j}}
       \E_{b_j|\ctx{j}}
       \E_{b_{\mathrm{rem}(j)}|\ctx{j},b_j}
       \Big[f(b)s_j(b_j)\Big] \notag \\
    &= \E_{\ctx{j}}
       \E_{b_j|\ctx{j}}
       \big[Q(\ctx{j},b_j)
           s_j(b_j)\big], \qquad Q(z_j, v) \coloneqq \E_{b_{\mathrm{rem(j)}}}\big[f(b)|\ctx{j},b_j=v\big].
\end{align}
Here, $Q$ can be regarded as the action-value function, which is widely used in RL for variance reduction.
Since $b_j$ is categorical,
$ g = \nabla_\theta \gL (\theta) = \sum_{j=1}^{d}\E_{\ctx{j}}[\sum_{v\in\mathcal V}p_j(v)Q(\ctx{j},v)s_j(v)]$.

The exact sweep for $v \in \gV$ is usually too expensive to compute, since it costs one oracle call for each of the $dV$ pairs $(j,v)$, so a common way is to replace $Q$ by a predictor $\hat Q(z_j, v) \approx Q(z_j, v)$ fitted from previous oracle evaluations. 
However, simply replacing $Q$ by $\hat Q$ in~\Cref{eq:Q} would introduce bias due to approximation error. Instead, we use $\hat Q$ as a control variate~\citep{hammersley1964monte}. 
By adding and subtracting the expectation, we get the following estimator
\begin{align}\label{eq:g_hat_Q}
    \ghat{\hat{Q}}\coloneqq \sum_{j=1}^{d}\Big[ \underbrace{\big(f(b)-\hat{Q}(\ctx{j},b_j)\big)s_j}_{\mathrm{term \ (a)}}
        + \underbrace{\sum\nolimits_{v\in\mathcal V}p_j(v)\hat{Q}(\ctx{j},v)s_j(v)}_{\mathrm{term \ (b)}} \Big].
\end{align}
Here, $\mathrm{term \ (a)}$ denotes the score-function with an action-dependent baseline, while $\mathrm{term \ (b)}$ is its correction term, given by the expectation of the baseline-weighted score over the current action.
The estimator pays exactly one oracle call per step, the same as REINFORCE, and the sweep it needs is over the \emph{fitted model} $\hat Q$, not over the oracle.
\begin{lemma}[Unbiasedness and coordinate-wise optimality]\label[lemma]{lem:q_coord_opt}
Given a fitted predictor $\hat Q$, \Cref{eq:g_hat_Q} is unbiased for any choice of $\hat Q$, i.e.,  $ \E[\ghat{\hat{Q}}]=\nabla_\theta\gL(\theta) = g$.
Moreover, writing $\ghat{\hat{Q}, j}$ for the $j$-th summand in~\Cref{eq:g_hat_Q}, the exact conditional mean $Q$ minimizes its conditional variance, i.e., $Q(z_j, \cdot) \in \argmin_{\hat{Q}} \Var(\ghat{\hat{Q}, j}|z_j)$.
\end{lemma}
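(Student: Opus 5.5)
The proof has two parts: unbiasedness, proved term by term, and the variance claim, which reduces to a weighted least-squares problem in the values $\hat Q(z_j,\cdot)$ once we condition on $z_j$.

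\emph{Unbiasedness.} Fix $j$ and condition on $\ctx{j}$. The estimator is linear in $f$ and $\hat Q$, so it suffices to show that the $\hat Q$-dependent pieces of the $j$-th summand have zero mean. The subtracted piece of term (a) satisfies
\[
\E_{b_j|\ctx{j}}\big[\hat Q(\ctx{j},b_j)s_j(b_j)\big]=\sum_{v\in\gV}p_j(v)\hat Q(\ctx{j},v)s_j(v),
\]
which is exactly term (b). Here term (b) is deterministic given $\ctx{j}$, and $\hat Q$ is treated as fixed, not depending on the current sample. The two pieces therefore cancel in conditional expectation, leaving $\E[f(b)s_j(b_j)]$. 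Summing over $j$ and using \Cref{eq:sf} gives $\E[\ghat{\hat Q}]=g$. Unbiasedness holds for every $\hat Q$ because we never used the form of $\hat Q$.

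\emph{Coordinate-wise optimality.} Conditional on $\ctx{j}$, term (b) is a constant, so
\[
\Var(\ghat{\hat Q,j}\mid z_j)=\Var\big((f(b)-\hat Q(z_j,b_j))s_j(b_j)\mid z_j\big).
\]
Interpret variance of a vector as the trace of the covariance, i.e. expected squared norm minus squared norm of the mean. By the first part, the conditional mean $\E[f s_j\mid z_j]$ does not depend on $\hat Q$. Minimizing the variance is therefore the same as minimizing
\[
\E\big[(f(b)-\hat Q(z_j,b_j))^2\|s_j(b_j)\|^2 \,\big|\, z_j\big].
\]
Apply the tower property over $b_{\mathrm{rem}(j)}$ given $(z_j,b_j=v)$ and decompose $f-\hat Q=(f-Q)+(Q-\hat Q)$. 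The cross term vanishes because $\E[f-Q\mid z_j,b_j]=0$ by the definition of $Q$ in \Cref{eq:Q}. This gives
\[
\sum_v p_j(v)\|s_j(v)\|^2\Big(\E[(f-Q)^2\mid z_j,v]+(Q(z_j,v)-\hat Q(z_j,v))^2\Big).
\]
The first summand is independent of $\hat Q$ and the second is nonnegative, so $\hat Q(z_j,\cdot)=Q(z_j,\cdot)$ attains the minimum.

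\emph{Main obstacle.} The only step needing care is the bookkeeping: term (b) must be recognized as $\hat Q$-dependent but conditionally constant, so it drops out of the variance while still entering the mean. The remaining work is a standard weighted least-squares argument. Note also that the minimizer is unique only on actions $v$ with $p_j(v)\|s_j(v)\|^2>0$, which is why the statement uses $\in\argmin$ rather than equality.
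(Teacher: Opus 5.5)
Your unbiasedness argument is correct and is the same as the paper's. The optimality step, however, rests on a false reduction, even though its conclusion is true. After correctly dropping the conditionally constant term (b), you are computing $\Var\bigl((f-\hat Q(z_j,b_j))s_j(b_j)\mid z_j\bigr)$, the variance of term (a) alone. But the mean you then discard is that of the \emph{full} summand, $\E[f s_j\mid z_j]$. The conditional mean of term (a) is $\E[f s_j\mid z_j]-\sum_v p_j(v)\hat Q(z_j,v)s_j(v)$, and this does depend on $\hat Q$. This is exactly the bookkeeping you flagged as the main obstacle. With $\delta=\hat Q-Q$, the correct expression is
\[
\Var(\ghat{\hat Q,j}\mid z_j)=\sum_v p_j(v)\|s_j(v)\|^2\,\E\bigl[(f-Q)^2\mid z_j,b_j=v\bigr]+\sum_v p_j(v)\,\delta(v)^2\|s_j(v)\|^2-\Bigl\|\sum_v p_j(v)\,\delta(v)\,s_j(v)\Bigr\|^2 .
\]
So your weighted least-squares objective exceeds the variance by the $\hat Q$-dependent amount $\|\sum_v p_j(v)\delta(v)s_j(v)\|^2$, and showing that $Q$ minimizes it does not by itself show that $Q$ minimizes the variance. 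The repair is one line: the last two terms together equal $\Var(\delta(b_j)s_j(b_j)\mid z_j)\ge 0$, which vanishes at $\delta=0$. This is precisely the paper's argument. It writes $\ghat{Q+\delta,j}=\ghat{Q,j}-u_j$ with $u_j=\delta(z_j,b_j)s_j(b_j)-\sum_v p_j(v)\delta(z_j,v)s_j(v)$, which is conditionally centered and conditionally orthogonal to $(f-Q_j)s_j$. Hence $\Var(\ghat{Q+\delta,j}\mid z_j)=\Var(\ghat{Q,j}\mid z_j)+\E[\|u_j\|^2\mid z_j]$.

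The same slip invalidates your closing uniqueness remark. Any $\delta$ for which $\delta(v)s_j(v)$ is the same vector for every $v$ in the support gives $u_j\equiv 0$, so the summand is literally unchanged. Take a Bernoulli decision with $p_j(1)=\tfrac12$ and $s_j(v)=(v-\tfrac12)h$ for a fixed nonzero vector $h$, and set $\delta(v)=\epsilon(2v-1)$. Then $\delta(v)s_j(v)=\tfrac{\epsilon}{2}h$ for both $v$, so $\ghat{Q+\delta,j}=\ghat{Q,j}$, even though $p_j(v)\|s_j(v)\|^2>0$ for both values. Meanwhile your objective increases by $\epsilon^2\|h\|^2/4$. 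This is why the paper states the minimizer only up to perturbations that induce a zero correction. It is also a direct counterexample to your claim that the variance and your objective differ by a $\hat Q$-independent constant.
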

See Appendix~\ref{app:pf_q_coord_opt} for proof. The lemma explains why $Q$ is the natural target for a learned action-dependent control variate. It removes all variance that can be removed when the index $j$ is optimized in isolation. This is also the sense in which prior work calls it optimal~\citep{tokui2017evaluating}. 
However, previous work did not consider the total variance of the gradient estimator \emph{sum} over $j$, and their analysis does not guarantee its optimality.

\subsection{Why $Q$ Is Not Variance-Optimal}
\Cref{lem:q_coord_opt} shows that $Q$ can minimize each action's own gradient term in isolation. This does not imply that the variance of the sum of all $d$ terms in~\Cref{eq:g_hat_Q} is minimal, because those terms are vectors in one shared parameter space and can be correlated. The following example shows the gap exists even in the simplest case with $d=2$; Appendix~\ref{app:arbitrary} turns it into a family in which the exact value function removes \emph{none} of the REINFORCE variance while the optimum removes \emph{all} of it.

\begin{mdframed}
\begin{example}\label[example]{ex:two_bern}
    Consider two binary decisions generated one after another, $b_1\sim\mathrm{Bernoulli}(p)$ and $b_2\mid b_1\sim\mathrm{Bernoulli}(p)$, with a single shared parameter $p=\sigma(\theta)$, where $\sigma(\cdot)$ denotes the sigmoid function, and let $f(b_1,b_2)=b_1b_2$. At $\theta=0$ we have $p=1/2$ and $\nabla_\theta p=1/4$, so $\nabla_\theta\gL(\theta)=\nabla_\theta\,p^2=1/4$, and the score function is $s_j(v)=\nabla_\theta\log p(v)=v-1/2$. With prefix conditioning, $z_1=\emptyset$ and $z_2=b_1$, the exact conditional means are
    $Q_1(v)=\E[f\mid b_1=v]=v/2$ and $Q_2(b_1,v)=b_1v$:
    $Q_1$ averages out the later decision $b_2$, while $Q_2$ conditions on the earlier one. Substituting into~\Cref{eq:g_hat_Q} and using $b_1(b_1-\tfrac12)=b_1/2$,
    \begin{align*}
        \ghat{Q}
        =\underbrace{\tfrac{b_1}{2}\big(b_2-\tfrac12\big)}_{\mathrm{term\ (a)}}
        +\underbrace{\tfrac18+\tfrac{b_1}{4}}_{\mathrm{term\ (b)}},
        \qquad \E[\ghat{Q}]=\tfrac14,\qquad
        \Var(\ghat{Q})=\tfrac{3}{64}>0.
    \end{align*}
    Both summands remain random even though $Q$ is exact. The first through $f-Q_1(b_1)=b_1(b_2-\tfrac12)$, which depends on the \emph{later} decision, and the second through its correction term $\sum_v p_2(v)Q_2(b_1,v)s_2(v)=b_1/4$, which depends on the \emph{earlier} one. Now perturb the exact conditional means by $\Tilde Q_1=Q_1+\tfrac14$ and $\Tilde Q_2=Q_2+\tfrac{b_1}{2}$. Neither perturbation depends on $v$, so term (b) is unchanged (because $\sum_v p_j(v)s_j(v)=0$), while in term (a) the perturbations cancel the remaining randomness exactly:
    \begin{align*}
        \ghat{\Tilde Q}=
        \underbrace{-\tfrac14\big(b_1-\tfrac12\big)}_{\mathrm{term\ (a)}}
        +\underbrace{\tfrac18+\tfrac{b_1}{4}}_{\mathrm{term\ (b)}}
        =\tfrac14,\qquad
        \Var(\ghat{\Tilde Q})=0<\Var(\ghat{Q}).
    \end{align*}
\end{example}
\end{mdframed}
The issue in the $\ghat{\hat Q}$ estimator in~\Cref{eq:g_hat_Q} is not an inaccurate value function, because even $Q$ is exact, its variance could still be suboptimal. Rather, $Q$ minimizes the variance of one coordinate at a time, while the quantity we ultimately care about is the variance of their sum in a shared parameter space, which could potentially be further optimized. We next characterize exactly what remains.

\subsection{A General Framework for Variance-Optimal Control Variates}
In this section, we introduce a general correction term $\Delta$ to capture the remaining noise in $\ghat{\hat{Q}}$ estimator and minimize the variance of the resulting gradient estimator.
\paragraph{General control variate.} 
Let $\Delta_j(z_j, v) \in \R^n$ be any vector-valued function. We center it over the current action to obtain a zero-mean function:
\begin{align}\label{eq:centered_rho}
    \Tilde \Delta_j \coloneqq \Delta_j(z_j, b_j) - \sum_v p_j(v) \Delta_j(z_j, v), \quad U_\Delta \coloneqq \sum_{j=1}^d \Tilde \Delta_j.
\end{align}
By construction, $\E[\Tilde \Delta_j | z_j] = 0$ and hence $\E[U_\Delta] = 0$. Therefore, we have a general estimator
\begin{align}
    \ghat{\Delta} \coloneqq \ghat{Q} - U_\Delta,
\end{align}
which is unbiased for any choice of $\Delta$. Note that $\Delta_j$ is a function of the sampled action, so subtracting it keeps the per-action credit assignment of \Cref{eq:g_hat_Q} and costs no additional oracle call.

\paragraph{Decomposition of remaining noises.} Let $e_Q \coloneqq \ghat{Q} - g$ denote the {remaining noise} after subtracting $Q$. We first decompose the residual $e_Q$ into different action time steps. For each action $j$ and value $v$, define the \emph{residual increment}
\begin{align}\label{eq:xi_def}
    \xi_j(\ctx{j}, v) \coloneqq \E[\ghat{Q} | \ctx{j}, b_j = v] - \E[\ghat{Q} | \ctx{j}].
\end{align}
Thus, $\xi_j(\ctx{j}, v)$ measures how much choosing $b_j=v$ can change the expected gradient, given everything decided before $j$.  
By construction, we have
\begin{align}\label{eq:xi_centered}
   \E_{b_j|\ctx{j}}[\xi_j(\ctx{j},b_j)]  = \sum_{v} p_j(v) \xi_j(\ctx{j}, v) = 0.
\end{align}
\begin{lemma}[Action-wise decomposition]\label[lemma]{lem:decomp}
Let $\ctx{j}=b_{<j}$. Along a sampled $b$, we can write
$e_Q
=\ghat{Q} - g = \sum_{j=1}^{d}
\xi_j(\ctx{j},b_j)$.
Moreover, for any $i<j$, we have
$\E[\xi_i(\ctx{i},b_i)^\top\xi_j(\ctx{j},b_j)]=0$.
Therefore, the variance of $\ghat{Q}$ is 
$\Var(\ghat{Q})
    =
    \sum_{j=1}^{d}
    \E\big[\|\xi_j(z_j,b_j)\|_2^2\big]$.
\end{lemma}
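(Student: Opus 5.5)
This is a Doob martingale decomposition of $\ghat{Q}$ along the filtration generated by the sequential actions. Let $\mathcal F_j$ denote the $\sigma$-algebra generated by $b_{\le j}$, with $\mathcal F_0$ trivial. Since $\ctx{j}=b_{<j}$, we have $\E[\ghat{Q}\mid \ctx{j}, b_j]=\E[\ghat{Q}\mid\mathcal F_j]$ and $\E[\ghat{Q}\mid \ctx{j}]=\E[\ghat{Q}\mid\mathcal F_{j-1}]$. Hence \Cref{eq:xi_def} reads $\xi_j(\ctx{j},b_j)=M_j-M_{j-1}$, where $M_j\coloneqq\E[\ghat{Q}\mid\mathcal F_j]$.

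\textbf{Step 1 (telescoping).} Summing over $j$ gives $\sum_{j=1}^d \xi_j = M_d - M_0$. For $M_0$, the conditioning is trivial, so $M_0=\E[\ghat{Q}]=g$ by the unbiasedness part of \Cref{lem:q_coord_opt}. For $M_d$, observe that $\ghat{Q}$ is a deterministic function of the full sample $b$: $f(b)$, the scores $s_j(b_j)$, and $Q(\ctx{j},\cdot)$ all depend only on $b$, since the oracle is deterministic and $\theta$ is fixed. Therefore $M_d=\ghat{Q}$, and $e_Q=\sum_j\xi_j$. The one point to check carefully is that $b_{\mathrm{rem}(j)}$ contains no randomness beyond $b_{>j}$, so that conditioning on $b_{\le d}$ fixes everything. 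This holds under the prefix convention $\ctx{j}=b_{<j}$.

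\textbf{Step 2 (orthogonality).} Take $i<j$. Then $\xi_i(\ctx{i},b_i)$ is $\mathcal F_i$-measurable, and $\mathcal F_i\subseteq\mathcal F_{j-1}$, i.e., it is a function of $\ctx{j}=b_{<j}$. Conditioning on $\ctx{j}$ and pulling $\xi_i$ out gives
\begin{align*}
\E[\xi_i^\top\xi_j]=\E\big[\xi_i^\top\,\E[\xi_j\mid \ctx{j}]\big]=0,
\end{align*}
by the centering identity \Cref{eq:xi_centered}, which states $\E_{b_j\mid\ctx{j}}[\xi_j]=0$. Integrability poses no issue because $b$ ranges over a finite set.

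\textbf{Step 3 (variance).} Here I interpret $\Var$ of a vector as the trace of its covariance, $\E\|\ghat{Q}-g\|_2^2$, consistent with the scalar objective used throughout. Expanding $\E\|\sum_j\xi_j\|^2$ gives diagonal terms plus cross terms, and Step 2 makes every cross term vanish. This yields $\Var(\ghat{Q})=\sum_j\E\|\xi_j\|_2^2$.

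The main obstacle is only the bookkeeping in Step 1: making sure the conditional expectations in \Cref{eq:xi_def} are genuinely nested along a single filtration. This is exactly why the lemma requires $\ctx{j}=b_{<j}$ rather than an arbitrary context. The rest is the standard orthogonality of martingale increments.
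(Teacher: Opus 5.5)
Your proof is correct and matches the paper's argument step for step. The paper likewise telescopes $\E[\ghat{Q}\mid b_{\le j}]-\E[\ghat{Q}\mid b_{<j}]$, using that $\ghat{Q}$ is a function of $b$ and that $\E[\ghat{Q}]=g$ by \Cref{lem:q_coord_opt}; it kills the cross terms by conditioning on $z_j$ and invoking \Cref{eq:xi_centered}, and then expands $\|e_Q\|_2^2$, while your martingale framing and your reading of $\Var$ as $\E\|\ghat{Q}-g\|_2^2$ simply make explicit what the paper leaves implicit.
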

\vspace{-12pt}
\begin{proof}
Because $z_j=b_{<j}$ and $(z_j,b_j)=b_{\le j}$,
\begin{align*}
    \sum_{j=1}^{d}\xi_j(z_j,b_j)
    &=
    \sum_{j=1}^{d}
    \left(
        \E[\ghat{Q}| b_{\le j}]
        -
        \E[\ghat{Q}| b_{<j}]
    \right)
    =
    \ghat{Q}-\E[\ghat{Q}]
    =
    e_Q,
\end{align*}
where the sum telescopes, using that $\ghat{Q}$ is a function of $b=b_{\le d}$ alone, and that $\E[\ghat{Q}]=g$ by \Cref{lem:q_coord_opt}. For any $i<j$, $\xi_i(z_i,b_i)$ is determined by $b_{\le i}$ and hence by $z_j=b_{<j}$. Therefore,
$\E[\xi_i^\top\xi_j]
    =
    \E\left[
        \xi_i^\top
        \E[\xi_j| z_j]
    \right]
    =0$,
where the last equality follows from~\Cref{eq:xi_centered}.
The variance identity then follows by expanding $\|e_Q\|_2^2$ and using the orthogonality above.
\end{proof}
\Cref{lem:decomp} is the step that makes the problem tractable. The residual left by $Q$ is a difference sequence in the action index, so its variance carries \textbf{no cross terms} even though the underlying gradient contributions are strongly coupled through $\theta$.
Then, we are ready to derive our main theorem: the variance-optimal correction in closed form.
\begin{theorem}\label{thm:opt_cv}
    Let $z_j = b_{<j}$. For any control variates $\Delta = (\Delta_1, \dots, \Delta_d)$ in~\Cref{eq:centered_rho},
    \begin{align}\label{eq:var_loss}
        \Var(\ghat{\Delta}) = \Var(\ghat{Q}) + \sum_{j=1}^d \E[\|\Tilde \Delta_j\|^2 - 2 \xi_j^\top \Tilde \Delta_j] = \sum_{j=1}^d \E \|\xi_j(z_j, b_j) - \Tilde \Delta_j(z_j, b_j)\|^2.
    \end{align}
    Hence, among unrestricted control variates,  the optimal one is
        $\Tilde \Delta_j^*(z_j,v) = \xi_j(z_j, v), \forall j \in [d]$, which attains $\Var(\ghat{\Delta^*}) = 0$ (see Remark~\ref{rem:doob}).
\end{theorem}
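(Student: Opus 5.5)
The plan is to reduce everything to the martingale-difference structure already established in \Cref{lem:decomp}. Since $\E[U_\Delta]=0$ and $\E[\ghat{Q}]=g$, the estimator $\ghat{\Delta}$ is unbiased, and its error can be written as
\[
\ghat{\Delta}-g = e_Q - U_\Delta = \sum_{j=1}^d \big(\xi_j(z_j,b_j)-\Tilde\Delta_j(z_j,b_j)\big).
\]
Here the last step uses the telescoping representation of $e_Q$ from \Cref{lem:decomp} together with the definition $U_\Delta=\sum_j \Tilde\Delta_j$. Consequently $\Var(\ghat{\Delta})=\E\|\sum_j(\xi_j-\Tilde\Delta_j)\|^2$, and the first task is to show that this squared norm of a sum has no cross terms.

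\textbf{Step 1: the cross terms vanish.} Set $\eta_j\coloneqq\xi_j-\Tilde\Delta_j$, which is a function of $b_{\le j}=(z_j,b_j)$. It is also conditionally centered: $\E[\eta_j\mid z_j]=0$, because $\E[\xi_j\mid z_j]=0$ by \Cref{eq:xi_centered} and $\E[\Tilde\Delta_j\mid z_j]=0$ by the construction in \Cref{eq:centered_rho}. Now take $i<j$. The vector $\eta_i$ is measurable with respect to $b_{\le i}\subseteq b_{<j}=z_j$, so the tower property gives $\E[\eta_i^\top\eta_j]=\E[\eta_i^\top\E[\eta_j\mid z_j]]=0$. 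This is exactly the argument of \Cref{lem:decomp}, applied to $\eta$ in place of $\xi$. Expanding the square then yields $\Var(\ghat{\Delta})=\sum_j\E\|\xi_j-\Tilde\Delta_j\|^2$, which is the right-hand expression in \Cref{eq:var_loss}.

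\textbf{Step 2: the middle expression.} Expand each summand as $\|\xi_j-\Tilde\Delta_j\|^2=\|\xi_j\|^2+\|\Tilde\Delta_j\|^2-2\xi_j^\top\Tilde\Delta_j$, then sum over $j$. By \Cref{lem:decomp}, $\sum_j\E\|\xi_j\|^2=\Var(\ghat{Q})$, so the sum equals $\Var(\ghat{Q})+\sum_j\E[\|\Tilde\Delta_j\|^2-2\xi_j^\top\Tilde\Delta_j]$, as claimed.

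\textbf{Step 3: optimality.} The objective $\sum_j\E\|\xi_j-\Tilde\Delta_j\|^2$ is a sum of nonnegative terms, and each term depends only on $\Delta_j$. It therefore suffices to show that $\Tilde\Delta_j=\xi_j$ is realizable. Take $\Delta_j\coloneqq\xi_j$. Its centered version is $\xi_j-\sum_vp_j(v)\xi_j(z_j,v)=\xi_j$, where the subtracted sum vanishes by \Cref{eq:xi_centered}. So $\Tilde\Delta_j^*=\xi_j$ is attained, every term is zero, and $\Var(\ghat{\Delta^*})=0$. Since variance is nonnegative, this choice is optimal.

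I expect the main obstacle to be Step 1, specifically the measurability bookkeeping. One must check that $\Tilde\Delta_i$ depends only on $(z_i,b_i)$, and this relies on $z_j=b_{<j}$ being a growing filtration. If that did not hold, the orthogonality would fail. The rest is algebra.
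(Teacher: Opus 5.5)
Your proof is correct and uses the same core idea as the paper: martingale-difference orthogonality along the prefix filtration $z_j=b_{<j}$. The difference is organizational. You substitute the pathwise identity $e_Q=\sum_j\xi_j$ and apply the tower-property argument once to the combined, conditionally centered sequence $\xi_j-\Tilde\Delta_j$, which gives the right-hand form first. The paper instead expands $\E\|e_Q-\sum_j\Tilde\Delta_j\|^2$, kills the $\Tilde\Delta_i^\top\Tilde\Delta_j$ cross terms, and reduces $\E[e_Q^\top\Tilde\Delta_j]$ to $\E[\xi_j^\top\Tilde\Delta_j]$ by conditioning on $(z_j,b_j)$, which gives the middle form first. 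Your explicit check that centering $\Delta_j=\xi_j$ returns $\xi_j$, so the optimum is realizable, is a small detail the paper leaves implicit.
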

\begin{proof}
    See~\Cref{app:pf_opt_cv} for proof. The key idea is that Lemma~\ref{lem:decomp} decomposes the remaining noise into orthogonal action-wise residuals $\xi_j$. The variance therefore separates across $j$ and is minimized by matching each centered control variate to $\xi_j$.
\end{proof}
\vspace{-10pt}
\Cref{thm:opt_cv} shows that, after applying the conditional-mean control variate $Q$, the remaining variance is exactly the mismatch between the residual increment $\xi_j$ and the additional control variate, and that these mismatches add up over decisions with no cross terms. The unrestricted minimizer is a benchmark, not an estimator (Remark~\ref{rem:doob}); all optimality claims below refer to the realizable family of~\Cref{sec:row}. 

\begin{algorithm}[htbp]
\caption{Learning through a black box with an action-conditioned control variate.}
\label{alg:main}
\begin{algorithmic}[1]
\Require policy $p_\theta(b)=\prod_{j=1}^{d}p_j(b_j| z_j)$, black box $f$, fitted reward model $\hat f$, step size $\eta$
\State $\mathcal D\gets\{(b,f(b)):b\sim p_{\theta_0}\}$ \Comment{a fixed offline dataset; $\mathcal D=\emptyset$ also works}
\For{$t=1,2,\dots$}
  \State fit $\hat f$ to $\mathcal D$ \Comment{any regressor, e.g., a neural network}
  \State sample $b\sim p_\theta$ and pay one call $f(b)$ 
  \State $\hat Q(z_j,v)\gets\E\big[\hat f| z_j,b_j=v\big]$ for every $j \in [d]$ and value $v \in \gV$ \Comment{under $p_\theta$, no oracle call}
  \State $\ghat{\hat Q}\gets\sum_{j=1}^{d}\Big[\big(f(b)-\hat Q(z_j,b_j)\big)s_j(b_j)+\sum_v p_j(v)\hat Q(z_j,v)s_j(v)\Big]$ \Comment{\Cref{eq:g_hat_Q}}
  \State $\hat \xi_j\gets\E_{\hat f}[\hat g_{\hat Q}| z_j,b_j]-\E_{\hat f}[\hat g_{\hat Q}| z_j]$ \ for every $j \in [d]$ \Comment{\Cref{eq:xi_def}}
  \State $\hat g\gets\ghat{\hat Q}-\sum_{j,r}\Pi_{j,r}\hat \xi_{j,r}$ \quad 
  \Comment{no matrix solve, fast}
  \State $\theta\gets\theta-\eta\hat g$; \quad $\mathcal D\gets\mathcal D\cup\{(b,f(b))\}$
\EndFor
\end{algorithmic}
\end{algorithm}

\subsection{Variance-Optimal Corrections for Neural Network Gradients}\label{sec:row}
\Cref{thm:opt_cv} characterizes the ideal control variate in gradient space, but does not yet specify a practical representation. For a neural network, we know the direction in which each parameter's gradient can move, and we use this to build a practical correction. Consider a linear weight matrix and one of its parameter rows $r$. At action $j$, the gradient within that row lies along the corresponding layer input. Let $h_{j,r}$ denote this local input direction, which is fixed by the prefix $z_j$ and therefore does not depend on the candidate action $v$. The realizable row-wise subspace is
$\mathcal S_{j,r}\coloneqq \operatorname{span}\{h_{j,r}\}$.
Write $\xi_{j,r}$ variate for the component of $\xi_j$ in row $r$. We therefore restrict the correction for row $r$ to the realizable subspace $\mathcal S_{j,r}$. Within this restricted family,~\Cref{eq:var_loss} shows that the variance-minimizing correction is a point in $\mathcal S_{j,r}$ closest to $\xi_{j,r}$. The remaining question is therefore how to compute this closest realizable correction, which is answered by the following theorem.
\begin{theorem}\label{thm:row_opt}
Define an operator $\Pi_{j,r}(u)\coloneqq{ h_{j,r}h_{j,r}^\top u}/{\|h_{j,r}\|_2^2}$, with $\Pi_{j,r}(u)=0$ when $h_{j,r}=0$. Within the row-wise family, the variance minimization separates across $j$ and $r$; moreover, the optimal correction is automatically centered, i.e.,
\begin{align}\label{eq:row_var}
    \Tilde \Delta^*_{j, r}(z_j, v) = \Pi_{j, r} \xi_{j, r}(z_j, v), \ \ \Var(\ghat{\Delta^*})=\Var(\ghat{{Q}})-\sum_{j=1}^{d}
    \E\Big[\sum_r\left\|
            \Pi_{j,r}\big(\xi_{j,r}(z_j,b_j)\big)\right\|_2^2\Big].
\end{align}
\end{theorem}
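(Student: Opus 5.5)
The plan is to start from the exact variance identity of \Cref{thm:opt_cv}, $\Var(\ghat{\Delta})=\sum_{j}\E\|\xi_j(z_j,b_j)-\Tilde\Delta_j(z_j,b_j)\|^2$, which holds for any centered correction. The realizable family assigns to each action $j$ and each row $r$ a correction $\Delta_{j,r}(z_j,v)=c_{j,r}(z_j,v)\,h_{j,r}$ with an arbitrary scalar coefficient $c_{j,r}$. Since the rows of the parameter vector are disjoint coordinate blocks, the squared norm splits as $\|\xi_j-\Tilde\Delta_j\|^2=\sum_r\|\xi_{j,r}-\Tilde\Delta_{j,r}\|^2$. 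Therefore
\begin{align*}
\Var(\ghat{\Delta})=\sum_{j=1}^d\sum_r \E\big\|\xi_{j,r}(z_j,b_j)-\Tilde\Delta_{j,r}(z_j,b_j)\big\|_2^2 .
\end{align*}
The coefficients for different pairs $(j,r)$ are free and enter different summands, so the minimization separates across $j$ and $r$. Within each $(j,r)$ it separates further, pointwise in $(z_j,v)$.

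Next I would check that centering stays inside the family. Because $h_{j,r}$ depends only on $z_j$ and not on $v$, we have $\Tilde\Delta_{j,r}=\big(c_{j,r}(z_j,b_j)-\sum_v p_j(v)c_{j,r}(z_j,v)\big)h_{j,r}$. This is again a multiple of $h_{j,r}$, so the centered corrections are exactly the functions $\tilde c(z_j,v)\,h_{j,r}$ whose coefficients satisfy $\sum_v p_j(v)\tilde c(z_j,v)=0$. Now minimize pointwise without the centering constraint: for fixed $(z_j,v)$, the closest point of $\operatorname{span}\{h_{j,r}\}$ to $\xi_{j,r}(z_j,v)$ is the orthogonal projection $\Pi_{j,r}\xi_{j,r}(z_j,v)$. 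When $h_{j,r}=0$ the span is $\{0\}$ and $\Pi_{j,r}=0$, consistent with the convention in the statement.

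The remaining point, which I expect to be the main subtlety, is showing that this unconstrained pointwise minimizer satisfies the centering constraint automatically, so that it is genuinely attainable as some $\Tilde\Delta$. The argument: $\Pi_{j,r}$ is a linear map that depends only on $z_j$, so $\sum_v p_j(v)\Pi_{j,r}\xi_{j,r}(z_j,v)=\Pi_{j,r}\big(\sum_v p_j(v)\xi_{j,r}(z_j,v)\big)=0$ by \Cref{eq:xi_centered}. Hence choosing $\Delta_{j,r}=\Pi_{j,r}\xi_{j,r}$ gives $\Tilde\Delta_{j,r}=\Delta_{j,r}$. Since the constrained minimum can be no smaller than the unconstrained one, this choice is optimal in the family.

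Finally, for the variance formula, apply the Pythagorean identity $\|\xi-\Pi\xi\|^2=\|\xi\|^2-\|\Pi\xi\|^2$, which holds because $\Pi$ is an orthogonal projection. Summing over $r$ and $j$, taking expectations, and using $\Var(\ghat{Q})=\sum_j\E\|\xi_j\|^2$ from \Cref{lem:decomp} gives \Cref{eq:row_var}.
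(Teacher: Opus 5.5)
Your proposal is correct and follows essentially the same route as the paper. Both use the identity of \Cref{thm:opt_cv} to reduce the problem to a pointwise projection of $\xi_{j,r}(z_j,v)$ onto $\operatorname{span}\{h_{j,r}\}$, show that this projection is automatically centered because $\Pi_{j,r}$ is linear and independent of $v$ (together with \Cref{eq:xi_centered}), and conclude with the Pythagorean identity and \Cref{lem:decomp}; your explicit row-block splitting of the squared norm and your remark that the unconstrained pointwise minimum lower-bounds the centered one only make precise steps the paper leaves implicit.
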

\vspace{-12pt}
See Appendix~\ref{app:pf_row_opt} for proof.~\Cref{thm:row_opt} gives a simple interpretation of the optimal
row-wise correction. Each parameter row removes exactly the component of
the residual noise $\xi_{j,r}$ that lies along a gradient direction that
the row can produce. The orthogonal component cannot be removed within
this row-wise family, while the removable variance is precisely the
squared norm of the projection in~\Cref{eq:row_var}. Thus, the global
variance-minimization problem reduces to independent one-dimensional
projections, with no additional control-variate network or coupled
covariance solve.~\Cref{alg:main} summarizes the resulting procedure.

\section{Experiment Settings}
\Cref{sec:method} shows that the action-value function $Q$ removes decision-wise noise, while the correction $\Delta$ targets the remaining variance across decisions. We evaluate how much this additional correction helps in practice on three black-box problems: 1) a quadratic program (QP) from DFL, 2) a live LLM agent whose tools are selected by a language model, and 3) job-shop scheduling, where an LLM generates dispatching rules and a solver constructs the final schedule. Unless otherwise stated, results are averaged over multiple random seeds, and shaded regions indicate one standard deviation. See more experimental details in~\Cref{app:add_exp}.

\paragraph{Baselines.} We compare our approach with different methods:
\begin{itemize}[nosep,left=0.5em]
    \item \textbf{REINFORCE}~\citep{williams1992simple}: the vanilla score-function estimator with the same reward for all decisions.
    \item \textbf{State baseline $V$}: an action-independent prefix baseline,
    $V_j(z_j)=\sum_v p_j(v)Q_j(z_j,v)$.
    \item \textbf{RLOO}~\citep{kool2019buy}: uses rewards of other samples in the batch as a leave-one-out baseline.
    \item \textbf{GRPO}~\citep{shao2024deepseekmath}: normalizes each reward within the sampled group.
    \item \textbf{OTB}~\citep{li2026the}: uses the optimal action-independent baseline at each token.
    \item \textbf{RELAX}~\citep{grathwohl2018backpropagation}: a variance-trained control variate with a pathwise correction (LAX for the Gaussian predictor, RELAX for the discrete ones).
\end{itemize}

\subsection{Portfolio Optimization (DFL)}
We first consider a DFL problem in which multiple predictions share the same network and jointly determine a downstream decision.
A shared network reads daily features and predicts the returns of $d=12$ assets.
Given a sampled return vector $b$, the \emph{black-box oracle} solves a risk-averse portfolio optimization problem to obtain a portfolio $w^*(b)$, evaluates this portfolio under the true returns, and returns only a scalar normalized reward.
The reward is the portfolio's mean-variance utility under the true returns, rescaled separately for each day. A reward of $1$ corresponds to the portfolio obtained from the true returns, while $0$ corresponds to the portfolio obtained from sign-flipped returns.

The predictor outputs a Gaussian distribution with fixed standard deviation, so the sum over $v$ in~\Cref{eq:g_hat_Q} becomes a Gaussian expectation that is available in closed form for a quadratic reward model.
We use an offline dataset of predicted returns and oracle rewards to fit this quadratic model and update during policy training.
The fitted model provides $\hat Q_j$ and the corresponding control variates without additional oracle calls.
The action space is continuous and the downstream reward varies smoothly. Our method extends to continuous actions because its derivation only requires expectations under the action distribution, which are available in closed form here under the Gaussian policy and quadratic reward model.

\subsection{Live MCP Agent}
We next consider a discrete black-box agent setting in which the reward is
observed only after an autoregressive configuration has been executed.
An LLM reads a task and generates the tool configuration for another black-box agent.
The configuration has $4$ slots, and each slot selects one of three candidate MCP servers or no server. The candidates are drawn from a pool of $12$ servers (e.g., filesystem, git, sqlite, and pytest). We allow at most $2$ of the $4$ slots to contain servers, while the remaining slots are empty, to avoid attaching unnecessary tools.
The agent then attempts the task using the selected servers, and the black box
returns $1$ only if the task is completed and every attached server is used.
Thus, one decision corresponds to one slot, while the reward is observed only
after the full configuration has been executed.

For the LLM predictor, we use the Llama-3.2-3B-Instruct model~\citep{grattafiori2024llama} with rank-$8$
LoRA~\citep{hu2022lora}, and for the black-box oracle, we run through OpenAI's GPT 4.1-mini API. For each task, only $1$--$3$ of the $256$ configurations succeed.
We use the four tasks that the base agent can solve and cache the reward of
every configuration, allowing us to run $20$ seeds and compute the exact
$P(\mathrm{success})$ throughout training.

\subsection{Job-Shop Scheduling}
Finally, we consider a combinatorial black-box setting where the reward is observed only after a complete dispatching plan is executed.
An LLM reads a job-shop instance and generates a dispatching plan.
The machines are partitioned into $4$ groups, and the plan assigns one of $5$ dispatch rules to each group, giving $5^4=625$ possible plans.
Given a plan, the {black-box oracle} constructs the schedule by selecting ready operations according to each machine group's rule.
It returns the instance lower bound divided by the resulting makespan, i.e., the total time to complete all jobs, giving a graded reward between $0$ and $1$.
Thus, one decision selects one group's rule, while the reward is returned only after the full schedule is constructed.
The four rules are generated autoregressively as a single JSON plan, so earlier choices can affect later ones.

\section{Discussion of Experimental Results}
\begin{figure}[htbp]
    \centering
    \includegraphics[width=0.9\linewidth]{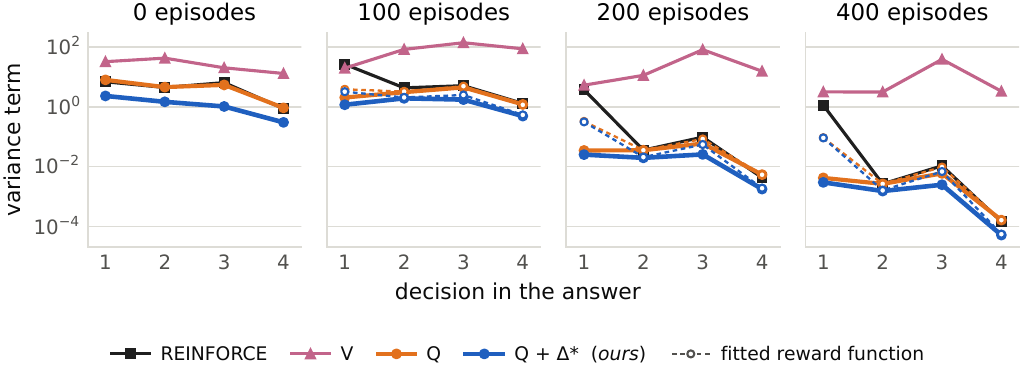}
    \vspace{-12pt}
    \caption{Variance across decisions in the LLM MCP task at different stages of training. The $x$-axis indexes the 4 decision slots, and the $y$-axis reports the corresponding variance on a log scale. Solid curves use the true reward table, while dotted curves use the fitted reward model.}
    \label{fig:decomp}
    \vspace{-20pt}
\end{figure}
\paragraph{When will we need an action-dependent baseline and where does the correction reduce variance for action-dependent baselines?}
Previously, \citet{tucker2018mirage} show that in practical RL settings, action-dependent baselines provide limited gains over action-independent ones, suggesting that their benefit can depend strongly on the problem structure.
Here, our decomposition provides a complementary view by identifying the variance that remains after action-value baseline has solved the local credit-assignment problem.

\Cref{fig:decomp} directly tests the decomposition in~\Cref{lem:decomp}. Each panel shows a different stage of training. It shows that the action-dependent baseline $Q$ consistently reduces variance compared to the action-independent baseline $V$, indicating that local credit assignment matters in this task. Second, substantial variance can remain after conditioning on the current action, and our correction $Q+\Delta^*$ further reduces this residual, which achieves the lowest variance across decisions and training stages. Third, the same ordering largely persists when $Q$ and the correction are built from a fitted reward model rather than the true reward table, which suggests that the gains are not restricted to oracle reward information.

\paragraph{Does the variance reduction improve learning?}
We next ask whether the variance reductions in \Cref{fig:decomp} are reflected in optimization performance. Across all three tasks in \Cref{fig:learning_curve}, the action-dependent baseline $Q$ learns faster and reaches better solutions than REINFORCE and the standard baselines, while adding our correction yields a further improvement and typically reaches strong solutions with fewer black-box evaluations. The effect is particularly large on the QP and MCP tasks, where $Q+\Delta^*$ approaches the exact-gradient reference and learns substantially faster. On MCP, the corrected estimator is also more stable under aggressive optimization, whereas higher-variance estimators can collapse to policies from which successful trajectories are rarely observed. The gain is smaller on JobShop, where the graded reward provides a denser learning signal, but the same qualitative ordering remains. Overall, the reduction in residual variance observed in \Cref{fig:decomp} is consistently accompanied by faster and more stable learning.
\begin{figure}[htbp]
    \centering
    \includegraphics[width=0.98\linewidth]{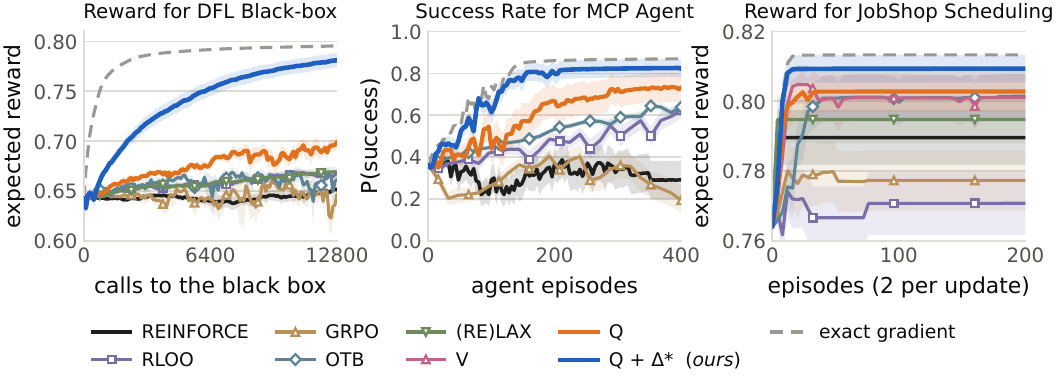}
    \vspace{-12pt}
    \caption{Training performance on three black-box tasks. We compare our methods with baselines. The exact gradient curve shows training with the exact gradient of the expected black-box objective and is included only as an oracle reference.}
    \label{fig:learning_curve}
    \vspace{-12pt}
\end{figure}





\begin{wrapfigure}{r}{0.52\textwidth}
  \centering
  \vspace{-0.2cm}
    \includegraphics[width=\linewidth]{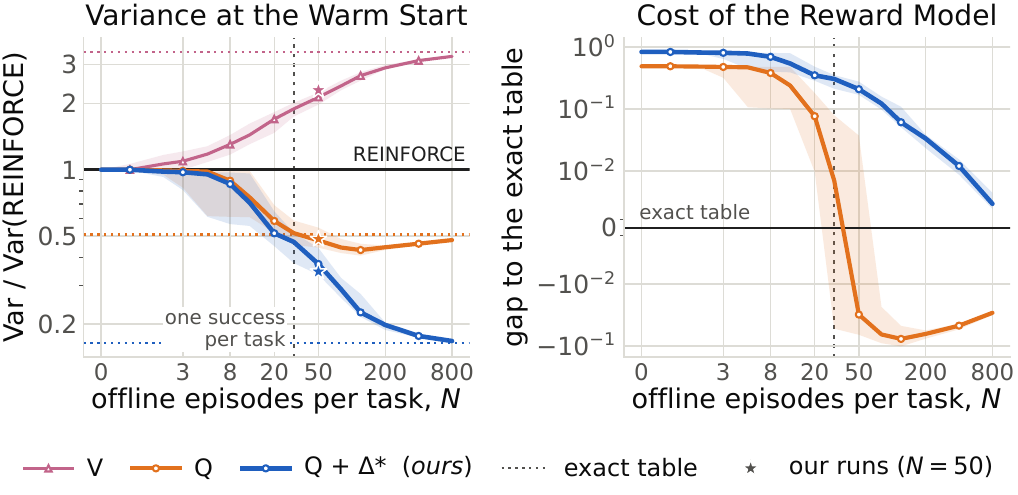}
    \vspace{-0.6cm}
    \caption{More offline data improves the fitted reward model and strengthens variance reduction.}
    \label{fig:offline_data}
\end{wrapfigure}
\paragraph{How much offline data is needed?} \Cref{fig:offline_data} varies the data used to fit the reward model. With too few episodes, the fitted baselines provide little benefit because successful trajectories are rarely observed. Once the budget reaches roughly one expected success per task, both $Q$ and $Q+\Delta^*$ reduce variance substantially, well before the reward model is exact. The right panel shows that fitted $Q$ can even outperform exact $Q$, since exact $Q$ is only coordinate-wise optimal, whereas the gap for $Q+\Delta^*$ remains positive and approaches zero as the reward model improves.


\section{Conclusion}
We show that the action-value function can remain suboptimal for the full gradient when decisions share parameters. By decomposing the remaining variance across decisions, we derive a closed-form correction that reduces to simple row-wise projections for neural-network parameters. Across three black-box tasks, the correction consistently reduces variance beyond the action-value function and improves learning. Overall, variance reduction should target the full shared-parameter gradient rather than each decision in isolation.

\subsection*{AI use statement}
In this work, we used generative AI tools for assisting in the writing of proofs, designing or providing feedback on research methodology or experiments, implementing methods.
We have not used generative AI tools for generating synthetic data sets, helping develop theoretical models or conceptual frameworks, formulating mathematical claims, providing critical ingredients for proving mathematical claims, proposing or refine hypotheses, cleaning and reformatting dataset, supporting qualitative and thematic data analysis, interpret results.
and assisting with translation are not applicable to this work.
Additionally, we used generative AI tools for creating or modifying scientific figures or images,  creating or editing software coding, and editing a research paper to improve readability. We have reviewed all AI-assisted work. We take responsibility for the final content of this work,
including text, claims or artifacts produced with the aid of generative AI.





\subsection*{Reproducibility statement}
We release an open-source repository containing all code, configuration files, and scripts needed to reproduce our results, including data generation and figure plotting. All proofs for the main paper are stated in the appendix with explanations and proper assumptions.



\bibliography{iclr2027_conference}
\bibliographystyle{iclr2027_conference}

\appendix
\section{Appendix}

\subsection{Proof for~\Cref{lem:q_coord_opt}}\label{app:pf_q_coord_opt}
\begin{proof}
For unbiasedness, conditioning on $z_j$ gives
\begin{align}
    \E_{b_j|z_j}
    \big[
        \hat Q_j(z_j,b_j)s_j(b_j)
    \big]
    =
    \sum_{v\in\mathcal V}
    p_j(v)\hat Q_j(z_j,v)s_j(v),
\end{align}
so the two terms involving $\hat Q_j$ cancel in expectation.

For the second claim, consider any perturbation $Q_j+\delta_j$. Let
$\ghat{Q,j}$ denote the $j$-th summand obtained with the exact $Q_j$, and define
\begin{align}
    u_j
    :=
    \delta_j(z_j,b_j)s_j(b_j)
    -
    \sum_v p_j(v)\delta_j(z_j,v)s_j(v).
\end{align}
Then
\begin{align}
    \ghat{Q+\delta,j}
    =
    \ghat{Q,j}-u_j,
    \qquad
    \E[u_j|z_j]=0.
\end{align}
At the exact conditional mean,
\begin{align}
    \ghat{Q,j}
    -
    \E[\ghat{Q,j}|z_j]
    =
    \big(f(b)-Q_j(z_j,b_j)\big)s_j(b_j).
\end{align}
Since $\E[f-Q_j|z_j,b_j]=0$, this residual is conditionally orthogonal to $u_j$. Hence
\begin{align}
    \Var\big(\ghat{Q+\delta,j}|z_j\big)
    =
    \Var\big(\ghat{Q,j}|z_j\big)
    +
    \E\big[\|u_j\|_2^2|z_j\big],
\end{align}
which is minimized by the exact conditional mean (up to perturbations that induce the same zero correction).
\end{proof}

\subsection{Arbitrarily worse}\label{app:arbitrary}
\Cref{ex:two_bern} shows that an exact conditional value function need not be variance-optimal. This
appendix strengthens the statement in the two directions that matter for the claim in the abstract. First, the
suboptimality can be made arbitrarily large while both variances stay positive. Second, in the worst case,
the exact value function can remove \emph{none} of the REINFORCE variance while the optimal correction
removes \emph{all} of it. Both constructions are autoregressive, so they are instances of the setting that
\Cref{lem:decomp} and \Cref{thm:row_opt} are stated for, and in both the value function is exact rather
than fitted.

\begin{proposition}[The gap is unbounded]\label[proposition]{prop:arbitrary}
Consider the autoregressive setting with the exact conditional value function $Q$,
and let $\ghat{\mathrm{row}}$ be the optimal row-wise estimator of \Cref{thm:row_opt}.
\begin{enumerate}[leftmargin=2em,itemsep=1pt,topsep=2pt]
    \item For every $\rho\in[0,1)$ there is an instance with $d=2$ binary decisions and a single shared
    weight vector in which
    \begin{align*}
        \Var(\ghat{Q})=\tfrac12,\qquad
        \Var(\ghat{\mathrm{row}})=\tfrac12\big(1-\rho^{2}\big),\qquad\text{so}\qquad
        \frac{\Var(\ghat{Q})}{\Var(\ghat{\Delta})}=\frac{1}{1-\rho^{2}},
    \end{align*}
    where $\rho$ is the cosine between the two decisions' input directions. Both variances are strictly
    positive and the ratio is unbounded as $\rho\to1$.
    \item For every $d\ge2$ there is an instance with $d$ binary decisions and one shared parameter in which
    \begin{align*}
        \Var(\ghat{\mathrm{SF}})=\Var(\ghat{Q})=\frac d4,
        \qquad \Var(\ghat{\Delta^*})=0 .
    \end{align*}
    The exact value function removes no variance at all, for any horizon, while the optimal correction of the
    same family removes all of it.
\end{enumerate}
\end{proposition}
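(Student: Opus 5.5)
Both parts are explicit constructions. In each one we evaluate $\Var(\ghat{Q})$ through \Cref{lem:decomp}, as $\sum_j \E\|\xi_j\|^2$, and the optimal variance through \Cref{thm:row_opt} (part 1) or \Cref{thm:opt_cv} (part 2). So the work reduces to choosing instances in which the residual increments $\xi_j$ are simple and their geometry relative to the realizable directions $h_{j,r}$ is transparent.

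\textbf{Part 2.} Generalize \Cref{ex:two_bern}. Take $d$ autoregressive Bernoulli decisions $b_j\sim\mathrm{Bernoulli}(\sigma(\theta))$ at $\theta=0$, with one shared scalar parameter, so $s_j(v)=v-\tfrac12$, and choose a reward whose score-function estimator is already coordinate-separable. The natural candidate is $f(b)=\sum_j c_j(b_j)$ with per-coordinate shifts, normalized so that $\Var(\ghat{\mathrm{SF}})=d/4$. For this $f$ the $Q$-corrected estimator coincides with REINFORCE up to terms $Q_j$ that do not depend on $v$, which term (b) annihilates, so the value function removes nothing. More concretely, one wants $f-Q_j(z_j,b_j)$ to be independent of $b_j$; the cleanest choice may be $f(b)=\sum_j (b_j-\tfrac12)\cdot$const. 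With one shared scalar parameter, the gradient space is one-dimensional and the realizable subspace is all of $\R$. \Cref{thm:row_opt} then coincides with \Cref{thm:opt_cv}, and $\Tilde\Delta_j=\xi_j$ is realizable, giving $\Var(\ghat{\Delta^*})=0$. It remains to compute the $\xi_j$ explicitly, confirm $\sum_j\E\xi_j^2=d/4$, and check that $\Var(\ghat{\mathrm{SF}})=d/4$ as well. This is a short Bernoulli moment computation, and I would choose the constants so that every $\xi_j$ has second moment $1/4$.

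\textbf{Part 1.} Use a single shared weight vector $w$, with logits $w^\top h_j$ for two binary decisions whose unit input directions satisfy $h_1^\top h_2=\rho$, evaluated at $w=0$. Then $s_j(v)=(v-\tfrac12)h_j$, and each $\xi_j$ has the form $\alpha_j(b_{\le j})\,h_{j'}$ for suitable directions. The aim is a reward, for instance $f=b_1b_2$-type or a linear reward with cross coupling, for which the residual is $\xi_1\propto h_2$ and $\xi_2=0$, or the symmetric version. In words, decision $1$ perturbs the correction term attached to decision $2$, which lives along $h_2$, while decision $1$ can only correct along $h_1$. Projecting onto $\operatorname{span}\{h_1\}$ removes a fraction $\rho^2$ of $\E\|\xi_1\|^2$, so by \Cref{eq:row_var}
\[
\Var(\ghat{\mathrm{row}})=\Var(\ghat{Q})(1-\rho^2).
\]
Scaling $f$ gives $\Var(\ghat{Q})=\tfrac12$, and the ratio $1/(1-\rho^2)$ follows.

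\textbf{Main obstacle.} The hard step is choosing $f$ in part 1 so that the whole residual variance sits in a single $\xi_j$ pointing along the \emph{other} decision's direction; any component along its own $h_j$ would change the $\rho$-dependence. I would first try $f(b)=b_1 b_2$ and compute $\xi_1,\xi_2$ exactly, mirroring the calculation in \Cref{ex:two_bern}. If $\xi_2$ is nonzero there, I would add a term to $f$ that cancels the inner residual (for example, make $f$ depend on $b_2$ only through a term whose $Q_2$ is exact), and rescale at the end.
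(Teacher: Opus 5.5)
\textbf{Part 2 does not work as proposed.} An additive reward is exactly the case in which the value function \emph{does} help. Write $\epsilon_j\coloneqq b_j-\tfrac12$ and take your choice $f=\alpha\sum_j\epsilon_j$. Then $Q_j(z_j,v)=\alpha\sum_{i<j}\epsilon_i+\alpha(v-\tfrac12)$ depends on $v$. Its $v$-independent part is annihilated only in term (b); in term (a) it acts as a prefix baseline that cancels the past-reward cross terms $\epsilon_i\epsilon_j$, $i<j$. Concretely, $\ghat{\mathrm{SF}}=\alpha(\sum_j\epsilon_j)^2$ has variance $\alpha^2 d(d-1)/8$, while $\ghat{Q}=\alpha\sum_{i<j}\epsilon_i\epsilon_j+d\alpha/4$ has variance $\alpha^2 d(d-1)/32$. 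So $Q$ removes three quarters of the REINFORCE variance for every $\alpha$, and no normalization gives $\Var(\ghat{\mathrm{SF}})=\Var(\ghat{Q})=d/4$; constant shifts only inflate $\Var(\ghat{\mathrm{SF}})$.

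The missing idea is a reward on which the value function is uninformative on every proper prefix, i.e.\ $\E[f\mid b_{\le j}]=0$ for $j<d$. The paper uses the parity reward $f=\prod_k(2b_k-1)$. Then $Q_j\equiv 0$ for $j<d$, and with $P\coloneqq\prod_{k<d}(2b_k-1)$ the $j=d$ summand reduces to $P/2=f(b)\,s_d(b_d)$. Hence $\ghat{Q}=\ghat{\mathrm{SF}}$ pointwise, and $f^2\equiv1$ gives $\Var=\E[(\sum_j\epsilon_j)^2]=d/4$ at once. A mean-zero reward depending only on $b_d$, suitably rescaled, would also work. Your argument that $\Var(\ghat{\Delta^*})=0$ is correct and is the paper's: a scalar parameter makes the row-wise family unrestricted.

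\textbf{Part 1.} Your setup is the paper's and the result does follow, but the obstacle you describe is not real and your primary target is infeasible. For $d=2$ one has $Q_2=f$, so $\xi_2=(f(b_1,b_2)-Q_1(b_1))\,s_1(b_1)$. This vanishes only if $f$ ignores $b_2$, in which case $\ghat{Q}$ is deterministic; your fix of making $Q_2$ exact is vacuous, since $Q_2=f$ already. No tuning is needed. By \Cref{lem:xi}, $\xi_1=\tfrac14\big(\delta(b_1)-\E\delta(b_1)\big)h_2$ with $\delta(b_1)\coloneqq f(b_1,1)-f(b_1,0)$, and $\xi_2$ is a multiple of $h_1$. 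So for every reward each residual lies entirely along the \emph{other} decision's direction, and $\Pi_j$ removes exactly the fraction $\rho^2$ of each $\E\|\xi_j\|^2$. Hence any $f$ with $\Var(\ghat{Q})>0$ works after rescaling, including your first try $f=b_1b_2$, where $\E\|\xi_1\|^2=\tfrac1{64}$ and $\E\|\xi_2\|^2=\tfrac1{32}$. The paper takes $f=(2b_1-1)(2b_2-1)$, for which $\xi_1=\tfrac12(2b_1-1)h_2$ and $\xi_2=\tfrac12(2b_2-1)h_1$ each contribute $\tfrac14$.
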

\begin{proof}
At the stated parameter values every decision has $p_j(1)=\tfrac12$, so $s_j(v)=(v-\tfrac12)\,h_j$ in part~1 and $s_j(v)=v-\tfrac12$ in part~2. We use the identity $(2v-1)(v-\tfrac12)=\tfrac12$ for $v\in\{0,1\}$ throughout.

\emph{Part 1.} With prefix conditioning, the exact conditional means are
\begin{align*}
    Q_1(v)=\E[f\mid b_1=v]=(2v-1)\,\E[2b_2-1]=0,
    \qquad
    Q_2(b_1,v)=(2b_1-1)(2v-1)=f(b_1,v).
\end{align*}
In~\Cref{eq:g_hat_Q}, the correction term of $j=1$ vanishes because $Q_1\equiv0$, and term~(a) of $j=2$ vanishes because $Q_2=f$. The two surviving pieces are
\begin{align*}
    f(b)\,s_1(b_1)&=(2b_1-1)(2b_2-1)\big(b_1-\tfrac12\big)\,h_1=\tfrac12(2b_2-1)\,h_1,\\
    \sum_v \tfrac12\,Q_2(b_1,v)\,s_2(v)&=\tfrac12(2b_1-1)\sum_v (2v-1)\big(v-\tfrac12\big)\,h_2=\tfrac12(2b_1-1)\,h_2,
\end{align*}
so that
\begin{align*}
    \ghat{Q}=\tfrac12(2b_2-1)\,h_1+\tfrac12(2b_1-1)\,h_2,
    \qquad
    g=\E[\ghat{Q}]=0.
\end{align*}
The residual increments of~\Cref{eq:xi_def} are therefore
\begin{align*}
    \xi_1(b_1)=\E[\ghat{Q}\mid b_1]-g=\tfrac12(2b_1-1)\,h_2,
    \qquad
    \xi_2(b_1,b_2)=\ghat{Q}-\E[\ghat{Q}\mid b_1]=\tfrac12(2b_2-1)\,h_1,
\end{align*}
and \Cref{lem:decomp} gives $\Var(\ghat{Q})=\E\|\xi_1\|^2+\E\|\xi_2\|^2=\tfrac14+\tfrac14=\tfrac12$. Note that $\xi_1$ lies along $h_2$, the input direction of the \emph{other} decision: the noise left at decision~1 is entirely the effect of $b_1$ on decision~2's gradient term.

There is a single row, so $\Pi_j=h_jh_j^\top$ and
\begin{align*}
    \Pi_1\xi_1=\tfrac{\rho}{2}(2b_1-1)\,h_1=\rho\,s_1(b_1),
    \qquad
    \Pi_2\xi_2=\tfrac{\rho}{2}(2b_2-1)\,h_2=\rho\,s_2(b_2),
    \qquad
    \|\Pi_1\xi_1\|^2=\|\Pi_2\xi_2\|^2=\tfrac{\rho^2}{4}.
\end{align*}
\Cref{thm:row_opt} then yields
\begin{align*}
    \Var(\ghat{\mathrm{row}})=\tfrac12-\tfrac{\rho^2}{4}-\tfrac{\rho^2}{4}=\tfrac12\,(1-\rho^2),
    \qquad
    \ghat{\mathrm{row}}=\ghat{Q}-\rho\,s_1(b_1)-\rho\,s_2(b_2).
\end{align*}
The optimal row-wise correction is thus a baseline of value $\rho$ on each decision on top of $Q$. Its value is set by the cross-decision alignment; the classical per-decision baseline criterion would set it to zero here, since $\E[f-Q_j\mid z_j,b_j]=0$.

\emph{Part 2.} Write $P\coloneqq\prod_{k<d}(2b_k-1)\in\{\pm1\}$. Because the factors $2b_k-1$ are independent with mean zero,
\begin{align*}
    Q_j(b_{<j},v)=\prod_{k<j}(2b_k-1)\,(2v-1)\prod_{k>j}\E[2b_k-1]=0\quad (j<d),
    \qquad
    Q_d(b_{<d},v)=P\,(2v-1).
\end{align*}
For REINFORCE, $f^2\equiv1$ and $\E[f(b)(b_j-\tfrac12)]=0$ for $d\ge2$, so
\begin{align*}
    \Var(\ghat{\mathrm{SF}})=\E\Big[\Big(\sum_{j=1}^d\big(b_j-\tfrac12\big)\Big)^2\Big]=\frac d4 .
\end{align*}
For $\ghat{Q}$, each summand $j<d$ reduces to $f(b)\,s_j(b_j)$ (both $Q_j$ and its correction term vanish), while the summand $j=d$ reduces to its correction term $\sum_v\tfrac12\,P\,(2v-1)(v-\tfrac12)=\tfrac P2$. Hence
\begin{align*}
    \ghat{Q}
    =f(b)\sum_{j<d}\big(b_j-\tfrac12\big)+\frac P2
    =P\Big[(2b_d-1)\sum_{j<d}\big(b_j-\tfrac12\big)+\frac12\Big],
    \qquad \E[\ghat{Q}]=0=g,
\end{align*}
and, since $P^2=(2b_d-1)^2=1$ and $b_d$ is independent of $b_{<d}$ with $\E[2b_d-1]=0$,
\begin{align*}
    \E[\ghat{Q}^2]
    =\E\Big[\Big(\sum_{j<d}\big(b_j-\tfrac12\big)\Big)^2\Big]
    +\E\Big[(2b_d-1)\sum_{j<d}\big(b_j-\tfrac12\big)\Big]
    +\frac14
    =\frac{d-1}{4}+0+\frac14=\frac d4 .
\end{align*}
Finally, $\theta$ is a scalar, so every $h_{j,r}=1$ and the row-wise family is unrestricted; \Cref{thm:row_opt} with~\Cref{eq:vq_decomp} gives $\Var(\ghat{\mathrm{row}})=\Var(\ghat{Q})-\sum_j\E\|\xi_j\|^2=0$.
\end{proof}

\subsection{How can we compute the $\xi_j$?}
\begin{lemma}\label[lemma]{lem:xi}
Let $\ghat{Q,>j}:=\sum_{i>j}\ghat{Q,i}$. For an autoregressive model, revealing $b_j$
can affect gradient terms both before and after $j$, i.e.,
\begin{align}\label{eq:xi_decomp}
    \xi_j(z_j,v)
    =
    (
        Q(z_j,v)
        -
        \sum_w p_j(w)Q(z_j,w)
    )
    \sum_{i<j}s_i(b_i)
    +
    \E[\ghat{Q,>j}| z_j,b_j=v]
    -
    \E[\ghat{Q,>j}| z_j].
\end{align}
\end{lemma}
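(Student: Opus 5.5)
We split $\ghat{Q}=\sum_i \ghat{Q,i}$ into three groups, $i<j$, $i=j$ and $i>j$, and compute the increment $\xi_j(z_j,v)=\E[\ghat{Q}\mid z_j,b_j=v]-\E[\ghat{Q}\mid z_j]$ for each group separately. Since $z_j=b_{<j}$, conditioning on $z_j$ fixes every $b_i$ with $i<j$, and therefore fixes $z_i$ and $s_i(b_i)$ for all $i<j$. The group $i>j$ needs no computation: its increment is exactly the last two terms of \Cref{eq:xi_decomp}. So the proof comes down to showing that the $i=j$ term contributes nothing and that each $i<j$ term contributes $(Q(z_j,v)-\sum_w p_j(w)Q(z_j,w))\,s_i(b_i)$.

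\emph{The term $i=j$.} We have $\ghat{Q,j}=(f(b)-Q(z_j,b_j))s_j(b_j)+\sum_w p_j(w)Q(z_j,w)s_j(w)$. The second piece depends only on $z_j$, so it cancels in the difference. For the first piece, conditioning on $(z_j,b_j=v)$ gives $s_j(v)\,\E[f-Q(z_j,v)\mid z_j,b_j=v]=0$ by the definition of $Q$ in \Cref{eq:Q}. Averaging over $b_j$ then also gives $0$ conditionally on $z_j$. Hence the $i=j$ increment vanishes.

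\emph{The terms $i<j$.} Here $\ghat{Q,i}=(f(b)-Q(z_i,b_i))s_i(b_i)+c_i$, where $c_i$ denotes the correction term (b). Both $c_i$ and $Q(z_i,b_i)s_i(b_i)$ are functions of $b_{\le i}\subseteq z_j$, so they cancel in the difference. What remains is $s_i(b_i)\big(\E[f\mid z_j,b_j=v]-\E[f\mid z_j]\big)$. The first expectation is $Q(z_j,v)$ by definition. For the second, the tower rule gives $\E[f\mid z_j]=\sum_w p_j(w)Q(z_j,w)$. Summing over $i<j$ yields the first term of \Cref{eq:xi_decomp}.

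The main point needing care is the $i<j$ step. We must verify that conditioning on $z_j$ really freezes $s_i(b_i)$, $z_i$ and $Q(z_i,b_i)$. This holds because the prefix structure $z_i=b_{<i}$ with $i<j$ makes all of them $b_{<j}$-measurable, and it is exactly where the autoregressive conditioning is used. With that in place, summing the three groups completes the proof.
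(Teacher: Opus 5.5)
Your proof is correct and follows essentially the same route as the paper's. The paper also splits $\ghat{Q}$ by time step and notes that, after conditioning the summands $i\le j$ on $(z_j,b_j=v)$, only $f(b)$ remains random, with term~(a) of the $j$-th summand having zero conditional mean; this yields $Q(z_j,v)\sum_{i<j}s_i(b_i)$ plus a $v$-independent remainder, while the $i>j$ part is kept as $D_j-\bar D_j$. The only difference is organizational: you take increments group by group and treat $i=j$ separately, whereas the paper collects all $i\le j$ into one conditional expectation before subtracting.
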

\begin{proof}
We decompose $\ghat{Q}$ by time step. {For $i<j$, $s_i(b_i)$ and $Q_i(\ctx{i},b_i)$ are determined by $b_{\le j}$, and the $j$-th correction term $\sum_w p_j(w)Q_j(z_j,w)s_j(w)$ is determined by $z_j$; the only remaining random object in the summands $i\le j$ is $f(b)$, whose conditional mean is $Q_j(v)$ by~\Cref{eq:Q}. Conditioning the summands $i\le j$ of~\Cref{eq:g_hat_Q} on $(z_j,b_j=v)$ therefore gives}
\begin{align}
{\E\Big[\sum_{i\le j}\ghat{Q,i}\,\Big|\,\ctx{j},b_j=v\Big]=\;}Q_j(v)\underbrace{\sum_{i<j}s_i(b_i)}_{S_{<j}} \underbrace{-\sum_{i<j}Q_i(\ctx{i},b_i)s_i(b_i)+\sum_{i\le j}\sum_w p_i(w)Q_i(\ctx{i},w)s_i(w)}_{\coloneqq R_j \ \text{(does not depend on $v$})},
\end{align}
{where term~(a) of the $j$-th summand has zero conditional mean and was dropped.} Thus, we can rewrite~\Cref{eq:g_hat_Q} as
\begin{align}
    \E[\hat g_Q | \ctx{j},b_j=v] = R_j + Q_j S_{<j} + D_j,
\end{align}
where $D_j(v)=\mathbb{E}[\sum_{i>j}((f(b)-Q_i(\ctx{i},b_i))s_i(b_i)+\sum_{w}p_i(w)Q_i(\ctx{i},w)s_i(w))|\ctx{j},b_j=v]$ collects the remaining conditional expected gradient after choosing $v$.
Similarly, we have
\begin{align}
    \E[\hat g_Q | \ctx{j}] = R_j + V_j S_{<j} + \bar D_j,
\end{align}
where $\bar D_j=\sum_v p_j(v)D_j(v)$. Thus, we have 
\begin{align}
    \xi_j(\ctx{j}, v) \coloneqq \E[\ghat{Q} | \ctx{j}, b_j = v] - \E[\ghat{Q} | \ctx{j}] = (Q_j - V_j) S_{<j} + D_j - \bar D_j.
\end{align}
\end{proof}
{The first term, $(Q_j-V_j)S_{<j}$, arises because observing $b_j$ changes the credit assigned to the earlier score terms; the second, $D_j-\bar D_j$, because $b_j$ changes the distribution, and hence the expected gradient contributions, of the later decisions. $Q$ removes the randomness local to decision $j$ (there is no $i=j$ term), but both cross-decision effects remain. Note that both terms are linear combinations of scores $s_i(\cdot)$ with $i\neq j$, which is what \Cref{cor:orth} uses.}

\subsection{Proof for~\Cref{thm:opt_cv}}\label{app:pf_opt_cv}
\begin{proof}
Since $\ghat{\Delta}=\ghat{Q}-\sum_j \Tilde\Delta_j$ is unbiased,
\begin{align}
    \Var(\ghat{\Delta})
    &=
    \E[
        \|e_Q-\sum_{j=1}^d \Tilde\Delta_j\|_2^2
    ]
    \notag\\
    &=
    V_Q
    -2\sum_{j=1}^d
    \E[
        e_Q^\top \Tilde\Delta_j
    ]
    +
    \E[
        \|\sum_{j=1}^d \Tilde\Delta_j\|_2^2
    ].
    \label{eq:cv_expand}
\end{align}

For $i<j$, $\Tilde\Delta_i$ is determined by $z_j=b_{<j}$, while
$\E[\Tilde\Delta_j\mid z_j]=0$. Hence
\begin{align}
    \E[
        \Tilde\Delta_i^\top
        \Tilde\Delta_j
    ]
    =
    \E[
        \Tilde\Delta_i^\top
        \E[\Tilde\Delta_j\mid z_j]
    ]
    =0,
\end{align}
so
\begin{align}
    \E[
        \|\sum_{j=1}^d \Tilde\Delta_j\|_2^2
    ]
    =
    \sum_{j=1}^d
    \E[
        \|\Tilde\Delta_j\|_2^2
    ].
    \label{eq:delta_orth}
\end{align}

Next, since $\Tilde\Delta_j$ depends only on $(z_j,b_j)$,
\begin{align}
    \E[
        e_Q^\top \Tilde\Delta_j
    ]
    &=
    \E[
        \E[e_Q\mid z_j,b_j]^\top
        \Tilde\Delta_j
    ].
\end{align}
By the definition of $\xi_j$,
\begin{align}
    \E[e_Q\mid z_j,b_j]
    =
    \xi_j(z_j,b_j)
    +
    \E[\ghat{Q}\mid z_j]
    -
    g.
\end{align}
The last two terms depend only on $z_j$, and therefore vanish after taking
the inner product with $\Tilde\Delta_j$. Thus,
\begin{align}
    \E[
        e_Q^\top \Tilde\Delta_j
    ]
    =
    \E[
        \xi_j^\top \Tilde\Delta_j
    ].
    \label{eq:xi_delta}
\end{align}

Substituting~\Cref{eq:delta_orth,eq:xi_delta} into~\Cref{eq:cv_expand} gives
\begin{align}
    \Var(\ghat{\Delta})
    =
    V_Q
    +
    \sum_{j=1}^d
    \E[
        \|\Tilde\Delta_j\|_2^2
        -
        2\xi_j^\top \Tilde\Delta_j
    ].
\end{align}

Finally, using
\begin{align}
    V_Q
    =
    \sum_{j=1}^d
    \E[
        \|\xi_j\|_2^2
    ],
\end{align}
we obtain
\begin{align}
    \Var(\ghat{\Delta})
    =
    \sum_{j=1}^d
    \E[
        \|\xi_j-\Tilde\Delta_j\|_2^2
    ].
\end{align}
Therefore, the unrestricted minimum is attained at
\begin{align}
    \Tilde\Delta_j^*(z_j,v)
    =
    \xi_j(z_j,v),
\end{align}
for each $j$.
\end{proof}

\subsection{Proof for~\Cref{thm:row_opt}}\label{app:pf_row_opt}
\begin{proof}
For a fixed $(j,r)$,~\Cref{eq:var_loss} reduces the optimization to
\begin{align*}
    \arg\min_{u\in\mathcal S_{j,r}}
    \|u-\xi_{j,r}(z_j,v)\|_2^2,
\end{align*}
whose solution is the Euclidean projection
\begin{align*}
    \Tilde\Delta_{j,r}^*(z_j,v)
    =
    \Pi_{j,r}\xi_{j,r}(z_j,v).
\end{align*}

Since $\Pi_{j,r}$ depends on the prefix and row but not on $v$,
\begin{align*}
    \sum_v p_j(v)\Tilde\Delta_{j,r}^*(z_j,v)
    &=
    \Pi_{j,r}
    \sum_v p_j(v)\xi_{j,r}(z_j,v)=0,
\end{align*}
where the last equality follows from~\Cref{eq:xi_centered}.
Thus, the projected correction is already centered.

Finally, by the Pythagorean identity,
\begin{align*}
    \|\xi_{j,r}-\Pi_{j,r}\xi_{j,r}\|_2^2
    =
    \|\xi_{j,r}\|_2^2
    -
    \|\Pi_{j,r}\xi_{j,r}\|_2^2.
\end{align*}
Summing over rows and decisions and using~\Cref{eq:vq_decomp}
gives the stated variance formula.
\end{proof}

\subsection{Additional Remarks}\label[remark]{app:remarks}
\begin{remark}[The unrestricted optimum is only a benchmark]\label{rem:doob}
Setting $\Tilde\Delta_j=\xi_j$ removes all remaining noise in $\ghat{Q}$ and therefore gives zero variance. However, computing $\xi_j$ requires exact conditional expectations of the black-box reward, so this choice is not practical. We therefore use it only as an oracle benchmark.

In practice, the correction is restricted by the network gradient structure. Under this restriction, \Cref{thm:opt_cv} shows that the global variance-minimization problem decomposes into simple row-wise problems, leading to \Cref{thm:row_opt}. All optimality claims in this paper refer to this realizable family.
\end{remark}

\begin{corollary}[When $Q$ is already optimal, and a diagnostic]\label[corollary]{cor:orth}
Suppose that, for every row $r$ and every prefix, the input direction $h_{j,r}$ of decision $j$ is orthogonal to the input directions $h_{i,r}$ of all other decisions $i\neq j$. Then $\Pi_{j,r}\xi_{j,r}=0$ for all $(j,r)$, so $\Var(\ghat{\Delta^*})=\Var(\ghat{Q})$. This means $Q$ is variance-optimal within the row-wise family. This holds in particular for disjoint per-decision parameters, recovering \citet{tokui2017evaluating} and the orthogonal-score condition of \citet{wu2018variance}; conversely, in \Cref{prop:arbitrary}(1) the removed fraction of $\Var(\ghat{Q})$ is $\rho^2$.
\end{corollary}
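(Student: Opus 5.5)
The plan is to show that, under the orthogonality hypothesis, every row component $\xi_{j,r}(z_j,v)$ lies in the span of the other decisions' input directions. Then $h_{j,r}^\top\xi_{j,r}=0$, so $\Pi_{j,r}\xi_{j,r}=0$. Substituting this into \Cref{eq:row_var} of \Cref{thm:row_opt} gives $\Var(\ghat{\Delta^*})=\Var(\ghat{Q})$ immediately. The starting point is \Cref{lem:xi}, which writes
\[
\xi_j=(Q_j-V_j)\,S_{<j}+D_j-\bar D_j .
\]
As noted after that lemma, both pieces are linear combinations of score vectors $s_i(\cdot)$ with $i\neq j$ only; there is no $i=j$ term.

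The first step is to pin down the row structure of the scores. For a linear layer, the row-$r$ block of $s_i(w)=\nabla_\theta\log p_i(w)$ equals a scalar times the layer input. That scalar is the backpropagated derivative of $\log p_i(w)$ with respect to the row-$r$ pre-activation. So $s_{i,r}(w)=c_{i,r}(w)\,h_{i,r}$, where $h_{i,r}$ is determined by the prefix $z_i$.

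For $i<j$, the term $(Q_j-V_j)S_{<j}$ has row-$r$ component $(Q_j-V_j)\sum_{i<j}c_{i,r}(b_i)\,h_{i,r}$. This is a combination of $h_{i,r}$ evaluated along the realized prefix $z_j$.

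For $i>j$, the term $D_j-\bar D_j$ is a conditional expectation of summands that are themselves scalar multiples of $s_i(\cdot)$ for $i>j$. Their row-$r$ components are expectations, over continuations $b_{>j}$ of the prefix, of scalars times $h_{i,r}(z_i)$. The result is therefore a (possibly infinite, but here finite since $\mathcal V$ is finite) linear combination of vectors $h_{i,r}(z_i)$ over prefixes $z_i$ extending $(z_j,v)$ and also $(z_j,w)$ for the other values $w$.

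The second step applies the hypothesis. Because it is assumed \emph{for every prefix}, $h_{j,r}(z_j)$ is orthogonal to each such $h_{i,r}(z_i)$. Hence $h_{j,r}^\top\xi_{j,r}(z_j,v)=0$ for every $v$, and the projection vanishes. The degenerate case $h_{j,r}=0$ is covered by the convention $\Pi_{j,r}=0$.

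The step I expect to need the most care is exactly this bookkeeping for $i>j$. One must confirm that the later decisions' directions entering $D_j(v)$ are evaluated at extended prefixes, and that the hypothesis is strong enough to be read as orthogonality between $h_{j,r}$ at $z_j$ and $h_{i,r}$ at every continuation. I would state explicitly that this is how ``for every prefix'' is interpreted.

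For the special cases, disjoint per-decision parameters mean that, in any given row, at most one decision has a nonzero input direction, so the directions are trivially orthogonal. This recovers the optimality of $Q$ from \citet{tokui2017evaluating} and, equivalently, the orthogonal-score condition of \citet{wu2018variance}. The converse remark follows from the computation in \Cref{prop:arbitrary}(1): there, $\sum_j\E\|\Pi_j\xi_j\|^2=\rho^2/2$ while $\Var(\ghat{Q})=1/2$, so the removed fraction is $\rho^2$.
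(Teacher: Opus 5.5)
Your proposal is correct and follows essentially the same route as the paper. You use \Cref{lem:xi} to write $\xi_j$ as a combination of the other decisions' scores (earlier ones along the realized prefix, later ones averaged over continuations), note that their row-$r$ components lie in the span of the $h_{i,r}$ with $i\neq j$ so the orthogonality hypothesis gives $\Pi_{j,r}\xi_{j,r}=0$, substitute into \Cref{eq:row_var}, and read the $\rho^2$ fraction off \Cref{prop:arbitrary}(1). Your explicit bookkeeping of continuations through $(z_j,w)$ for every $w$ in $\bar D_j$, and your reading of ``for every prefix'' as covering them, simply makes precise what the paper's ``averaged over continuations'' leaves implicit.
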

\begin{proof}
By \Cref{lem:xi}, $\xi_j$ is a linear combination of scores $s_i(\cdot)$ of the other decisions $i\neq j$ (averaged over continuations for $i>j$), whose row-$r$ components lie in $\operatorname{span}\{h_{i,r}\}$; under the orthogonality assumption their projections onto $\mathcal S_{j,r}$ vanish, so $\Pi_{j,r}\xi_{j,r}=0$ and \Cref{eq:row_var} gives $\Var(\ghat{\Delta^*})=\Var(\ghat{Q})$. When decisions use disjoint parameters, $h_{j,r}$ and $h_{i,r}$ are supported on disjoint coordinate blocks and are therefore orthogonal. The second claim is computed in the proof of \Cref{prop:arbitrary}.
\end{proof}

\begin{remark}[Computational cost]\label[Remark]{rem:cost}
Line~7 of \Cref{alg:main} requires $\E_{\hat f}[\ghat{\hat Q}\mid z_j,b_j=v]$ for every $(j,v)$, which by \Cref{lem:xi} involves an expectation over the continuations $b_{>j}$ under $p_\theta$. In our experiments the continuations are enumerated exactly (at most $625$ sequences, or a closed-form Gaussian--quadratic expectation), so the correction costs no oracle call but $\gO(d|\gV|)$ evaluations of the fitted model per step, and the enumeration grows exponentially with $d$. For long horizons one can instead draw, independently of the realized $b$, a single continuation per $(j,v)$ under $p_\theta$ and use its value of $\ghat{\hat Q}$ in place of the expectation, at $\gO(d|\gV|)$ model rollouts per step. Because the same sampled continuations enter both terms of \Cref{eq:centered_rho}, the resulting $\Tilde\Delta_j$ still has zero mean given $z_j$, so $\hat g$ remains unbiased; only the variance reduction degrades. We leave this variant to future work.
\end{remark}

\section{Additional Experimental Details}\label{app:add_exp}

\paragraph{The exact gradient.}
The dashed line in every training figure is $\nabla_\theta\E[f]$ itself. On the QP the regret is a quadratic in the prediction, so the Gaussian expectation is closed-form; on the agent there are $256$ configurations per task and the reward of each is cached, so we enumerate them exactly. It is the ceiling of every method, and it costs no calls. The same enumeration gives the exact $\Var(\hat g)$ of every estimator at the current policy, which is what \Cref{fig:decomp} plots, and the exact $P(\text{success})$, which is what we report instead of a sampled success rate.

\paragraph{Portfolio Optimization from DFL} The offline dataset has $300$ calls, $50$ per day. The reward model of each day is a quadratic in $b$ fitted by least squares to the offline calls and the calls paid on that day, with the ridge picked by leave-one-out cross-validation. LAX starts its surrogate from the same dataset; the other baselines have no use for it. We then train with $12{,}800$ calls and $10$ seeds. RLOO, GRPO and OTB draw eight samples per step, and LAX refits its surrogate on the day's last $128$ samples.

\paragraph{Job-shop task and synthetic data}
\label{app:jobshop}
Each of $n$ jobs visits all $m$ machines once in a fixed order. Operations are non-preemptive, and each machine processes at most one operation at a time. The goal is to minimize the makespan, when all jobs finish. Synthetic instance families are summarized in Table~\ref{tab:jobshop_generation}.
Machines are divided into four groups by $g(k)=\lfloor4k/m\rfloor$, where $k$ is the zero-based machine index. The planner selects one of five rules per group, yielding $5^4=625$ plans:
\begin{itemize}[nosep]
    \item \texttt{est\_spt}: earliest start, then shortest processing time.
    \item \texttt{job\_ready}: earliest job availability.
    \item \texttt{ratio}: smallest $(e+1)/(w+1)$, where $e$ is the earliest start and $w$ is the job's remaining processing time.
    \item \texttt{fifo}: smallest next-operation index.
    \item \texttt{completion}: earliest prospective completion.
\end{itemize}
At each step, the constructor scores each job's next unscheduled operation using its machine group's rule and selects the smallest priority key. Ties use job index; postprocessing is disabled.

The reward is the workload lower bound divided by the schedule's makespan. This bound is the larger of the maximum total processing time of any job and the maximum workload on any machine. Higher reward indicates a shorter schedule relative to this bound, without requiring the optimal makespan to be known.

\begin{table}[!htbp]
    \centering\small
    \caption{Synthetic job-shop families. Integer ranges are inclusive; draws are independent except for the shared quantities specified below.}
    \label{tab:jobshop_generation}
    \begin{tabular}{p{0.19\linewidth}p{0.73\linewidth}}
        \toprule
        Family & Generation rule \\
        \midrule
        Uniform & Uniform random machine permutation per job; durations uniform in $1$--$99$. \\
        \addlinespace
        Bottleneck & As Uniform, but multiply durations on one uniformly chosen machine by $4$. \\
        \addlinespace
        Heterogeneous & Uniform routes; durations uniform in $1$--$25$, multiplied by a shared per-job factor uniform in $\{1,2,4\}$ and by a further $4$ independently with probability $0.15$ per operation. \\
        \addlinespace
        Correlated routes & One shared uniform random route, with $\max(1,\lfloor m/5\rfloor)$ uniformly sampled swaps of distinct positions per job; durations uniform in $1$--$99$. \\
        \bottomrule
    \end{tabular}
\end{table}

Each split contains $128$ distinct instances: four per size--family combination across eight sizes and four families. Training, validation, and test share these settings. All instances are checked for positive integer durations and valid machine routes.



\section{Additional Related Works}
\paragraph{Decision-focused learning.}
DFL trains the predictor with the loss of the induced decision~\citep{elmachtoub2022smart}. For more detailed introduction to DFL, we refer the readers to~\citep{mandi2024decision}. Most DFL methods differentiate through the optimization problem~\citep{amos2017optnet, donti2017task, wilder2019melding} or through a smoothed version of it~\cite{pogancic2020differentiation, berthet2020learning, niepert2021implicit}, which needs access to the structure of the problem. When the problem is a black-box, a common approach is to learn a differentiable surrogate of the decision loss~\citep{shah2022decision, zharmagambetov2023landscape}. In this work, we also learn a model of the loss, but we only use it to reduce the variance of an unbiased estimator, so its error never introduces a bias but only affects the variance.

\end{document}